\definecolor{Gray}{gray}{0.95}
\definecolor{DarkGray}{gray}{0.5}
\DeclarePairedDelimiter{\ceil}{\lceil}{\rceil}
\theoremstyle{plain}
\newtheorem{theorem}{Theorem}[section]
\theoremstyle{definition}
\theoremstyle{remark}
  \providecommand{\R}{\mathbb{R}} %
  \renewcommand{\epsilon}{\varepsilon}
\DeclarePairedDelimiterX{\inp}[2]{\langle}{\rangle}{#1, #2}
\DeclarePairedDelimiterX{\abs}[1]{\lvert}{\rvert}{#1}
\DeclarePairedDelimiterX{\norm}[1]{\lVert}{\rVert}{#1}
\DeclarePairedDelimiterX{\cbr}[1]{\{}{\}}{#1} %
\DeclarePairedDelimiterX{\rbr}[1]{(}{)}{#1} %
\DeclarePairedDelimiterX{\sbr}[1]{[}{]}{#1} %
\definecolor{mydarkblue}{rgb}{0,0.08,0.45}
\newcommand{\cD}{{\cal D}}
\newcommand{\cX}{{\cal X}}
\DeclareMathOperator{\rank}{rank}       %
\newcommand{\EE}[2]{{\bf E}_{#1}\left[#2\right] }
\newcommand{\tool}{{\sc Maestro}\xspace}
\newcommand{\technique}{{\sc LoD}\xspace}
\newif\ifarxiv
\newcommand{\rebb}[1]{\textcolor{black}{#1}}
\newcommand{\red}[1]{{#1}}
\newcommand{\camready}[1]{{#1}}
\newcommand{\rebb}[1]{\textcolor{black}{#1}}
\newcommand{\red}[1]{{#1}}
\newcommand{\camready}[1]{{#1}}
\icmltitlerunning{Maestro: Uncovering Low-Rank Structures via Trainable Decomposition}
\begin{document}

\twocolumn[
\icmltitle{Maestro: Uncovering Low-Rank Structures via Trainable Decomposition}

\icmlsetsymbol{equal}{*}

\begin{icmlauthorlist}
\icmlauthor{Samuel Horv\'{a}th}{mbz}
\icmlauthor{Stefanos Laskaridis}{bra}
\icmlauthor{Shashank Rajput}{db}
\icmlauthor{Hongyi Wang}{cmu}
\end{icmlauthorlist}

\icmlaffiliation{mbz}{Mohamed bin Zayed
University of Artificial Intelligence (MBZUAI), Abu Dhabi, UAE}
\icmlaffiliation{bra}{Brave Software, London, UK}
\icmlaffiliation{db}{DataBricks, San Fransisco, USA}
\icmlaffiliation{cmu}{Carnegie Mellon University, Pittsburgh, USA}

\icmlcorrespondingauthor{Samuel Horv\'{a}th}{samuel.horvath@mbzuai.ac.ae}
\icmlcorrespondingauthor{Stefanos Laskaridis}{mail@stefanos.cc}

\icmlkeywords{Machine Learning, ICML}

\vskip 0.3in
]

\printAffiliationsAndNotice{}  %

\begin{abstract}

\red{Deep Neural Networks (DNNs) have been a large driver for AI breakthroughs in recent years.
However, these models have been getting increasingly large as they become more accurate and safe. This means that their training becomes increasingly costly and time-consuming and typically yields a single model to fit all targets.
Various techniques have been proposed in the literature to mitigate this, including pruning, sparsification, or quantization of model weights and updates. While achieving high compression rates, they often incur significant computational overheads at training or lead to non-negligible accuracy penalty. Alternatively, factorization methods have been leveraged for low-rank compression of DNNs. Similarly, such techniques (e.g.,~SVD) frequently rely on heavy iterative decompositions of layers and are potentially sub-optimal for non-linear models, such as DNNs.}
\red{We take a further step in designing efficient low-rank models and propose \tool, a framework for trainable low-rank layers. Instead of iteratively applying a priori decompositions, the low-rank structure is baked into the training process through \technique,
a low-rank ordered decomposition. Not only is this the first time importance ordering via sampling is applied on the decomposed DNN structure, but it also allows selecting ranks at a layer granularity.
Our theoretical analysis demonstrates that in special cases \technique recovers the SVD decomposition 
and PCA 
. Applied to DNNs, \tool enables the extraction of lower footprint models that preserve performance. Simultaneously, it enables the graceful trade-off between accuracy-latency for deployment to even more constrained devices without retraining.}
\end{abstract}

\section{Introduction}
\label{sec:introduction}
Deep Learning has been experiencing an unprecedented uptake, with models achieving a \mbox{(super-)human} level of performance in several tasks across modalities, giving birth to even more intelligent assistants~\citep{radford2023robust} and next-gen visual perception and generative systems~\citep{radford2021learning}. However, the price of this performance is that models are getting significantly larger, with training and deployment becoming increasingly costly~\citep{laskaridis2024melting}. Therefore, techniques from Efficient ML become evermore relevant~\citep{wan2023efficient}, and a requirement for deployment in constrained devices, such as smartphones or IoT devices~\citep{laskaridis2022future}.

Typical techniques to compress the network involve \textit{i)~quantization}, i.e.,~reducing precision of the model~\citep{wang2019deep} or communicated updates~\citep{seide20141, alistarh2017qsgd}, \textit{ii)~pruning} the model during training, e.g.,~through Lottery Ticket Hypothesis (LTH)~\citep{frankle2018the}, \textit{iii)~sparsification} of the network representation and updates, i.e.,~dropping the subset of coordinates~\citep{suresh2017distributed, alistarh2018convergence} or \textit{iv)~low-rank approximation~\citep{wang2021pufferfish,shrinkml2019}}, i.e.~keeping the most relevant ranks of the decomposed network. Despite the benefits during deployment, that is a lower footprint model, in many cases, the overhead during training time or the accuracy degradation can be non-negligible. Moreover, many techniques can introduce multiple hyperparameters or the need to fine-tune to recover the lost accuracy.

In this work, we focus training low-rank factorization. Specifically, we pinpoint the challenges of techniques~\citep{wang2021pufferfish,wang2023cuttlefish} when decomposing the parameters of each layer in low-rank space and the need to find the optimal ranks for each one at training time. To solve this, we \red{propose \technique (Low-rank ordered Decomposition), a non-trivially extended version of Ordered Dropout technique from~\citet{horvath2021fjord}, applied to progressively find the optimal decomposition for each layer of a DNN while training (Fig.~\ref{fig:tool}).} Critical differences to prior work include \textit{i)}~the non-uniformity of the search space (i.e. we allow for different ranks per layer), \textit{ii)}~the trainable aspect of the decomposition to reflect the data distribution, and \textit{iii)}~the gains to training and deployment time without sacrificing accuracy. Nevertheless, we also provide a latency-accuracy trade-off mechanism to deploy the model on even more constrained devices.

\begin{figure*}[t]
    \centering
    \vspace{-0.2cm}
    \includegraphics[trim={0 18cm 0 18cm}, clip, width=0.99\textwidth]{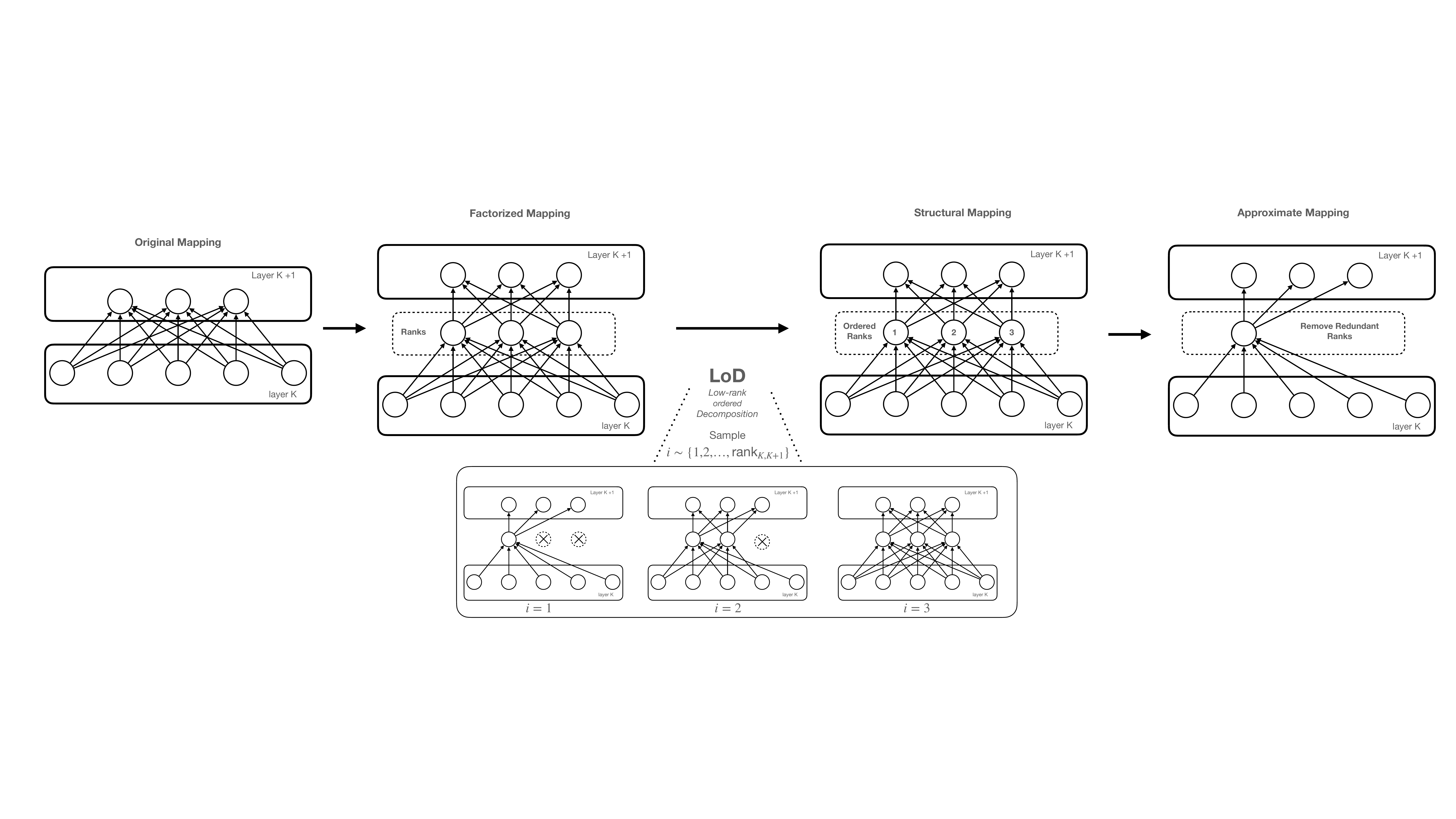}
    \captionsetup{font=small,labelfont=bf}
    \vspace{-0.1cm}
    \caption{\rebb{\tool's construction. To obtain low-rank approximation, the given linear map is decomposed and trained with \red{\technique} to obtain an ordered representation that can be efficiently pruned. 
    }}
    \label{fig:tool}
    \vspace{-0.6cm}
\end{figure*}
Our contributions can be summarized as follows:
\begin{itemize}[leftmargin=*,noitemsep,topsep=0pt]
    \item We propose \tool\footnote{\camready{The implementation can be found here: \texttt{\href{https://github.com/SamuelHorvath/Maestro-LoD}{https://github.com/SamuelHorvath/Maestro-LoD}}}}, a novel layer decomposition technique that enables learning low-rank layers in a progressive manner while training. 
    We fuse layer factorization and ordered dropout \red{into \technique, an extended variant of the ordered dropout}, \rebb{in a novel manner}, by embedding \red{ordered importance} directly into the factorized weights. By decomposing layers and training on stochastically sampled low-rank models, we apply ordered importance decomposed representation of each layer. We combine this with a \emph{ hierarchical group-lasso} term~\citep{yuan2006model} in the loss function to zero out redundant ranks and \emph{progressively shrink} the rank space. This way, we enable computationally efficient training achieved by the proposed decomposition  without relying on inexact and potentially computationally expensive \red{iterative decompositions} such as Singular Value Decomposition (SVD).
    \item \tool is \red{fundamentally} a theoretically motivated approach that embeds decomposition into training. First, we show that our new objective can recover \textit{i)}~the SVD of the target linear mapping for the particular case of uniform data distribution and \textit{ii)}~the Principal Component Analysis (PCA) of the data in the case of identity mapping.
    \item As \tool's decomposition is part of the training \red{process}, it also accounts for data distribution and the target function, contrary to SVD, which operates directly on learned weights. We show that this problem \emph{already arises} for a simple linear model and empirically generalize our results in the case of DNNs, by applying our method to different types of layers (including fully-connected, convolutional, and attention) spanning across three datasets and modalities. 
    \item \red{We illustrate that our technique achieves better results than SVD-based baselines at a lower cost. Indicatively, \tool can achieve on par or better results than low-rank SOTA methods on vision datasets (CIFAR-10, ImageNet) with lower training overhead due to progressive shrinking, while at the same time it reaches 6\% lower perplexity at a quarter of the computational cost and half of the parameters of SVD-variants in  Transformer models.}
\end{itemize}

\section{Related Work}
\label{sec:related_work}

The topic of Efficient ML has received a lot of attention throughout the past decade as networks have been getting increasingly computationally expensive. We distinguish between training and deployment time, with the latter having a more significant impact and thus amortizes the potential overhead during training. Nevertheless, \red{cost optimisation is evermore relevant for training large models, and the advent of} Federated Learning~\citep{mcmahan2017communication}, efficient training becomes increasingly relevant to remain tractable.

\noindent \textbf{Efficient inference.}
For efficient deployment, \rebb{various techniques have been proposed that either optimize} the architecture of the DNN in a hand-crafted~\citep{howard2017mobilenets} or automated manner (i.e.~NAS)~\citep{tan2019efficientnet}, they remove redundant computation by means of pruning parts of the network~\citep{han2015deep,carreira2018learning,frankle2018the, chen2021only,sreenivasan2022rare,li2016pruning, wen2016learning, hu2016network, wen2016learning, zhu2017prune, he2017channel, yang2017designing, liu2018rethinking, yu2019universally}, \rebb{in a structured or unstructured manner}, or utilise low-precision representation~\citep{wang2019deep} of the neurons and activations. \rebb{However, such techniques may involve non-negligible training overheads or lack flexibility of variable footprint upon deployment.}
Closer to our method, there have been techniques leveraging low-rank approximation (e.g.~SVD) for efficient inference~\citep{xue2013restructuring, sainath2013low, jaderberg2014speeding, wiesler2014mean,shrinkml2019}. Last, there is a category of techniques that dynamically resize the network at runtime for compute, memory or energy efficiency, based on early-exiting~\citep{laskaridis2021adaptive} or dynamic-width~\citep{yu2018slimmable} and leverage the accuracy-latency tradeoff.

\noindent \textbf{Efficient training.}
On the other hand, techniques for efficient training become very relevant nowadays when scaling DNNs sizes~\cite{hu2021lora} or deploying to embedded devices~\citep{lin2022ondevice}, and oftentimes offer additional gains at deployment time. Towards this goal, there have been employed methods where part of the network is masked~\citep{sidahmed2021efficient} or dropped~\citep{alam2022fedrolex,caldas2019expanding,Wu_2018_CVPR} during training, with the goal of minimizing the training footprint. Similarly to early-exiting, \rebb{multi-exit variants for efficient training~\citep{kim2023depthfl,liu2022federated} have been proposed}, and the same applies for width-based scaling~\cite{horvath2021fjord,diao2021heterofl}. Last but not least, in the era of transformers and LLMs, where networks have scaled exponentially in size, PEFT-based techniques, such as adapter-based fine-tuning~\citep{houlsby2019parameter} (such as LoRA~\citep{hu2021lora}), become increasingly important and make an important differentiator for tackling downstream tasks.

\textbf{Learning ordered representation.} Ordered Dropout (OD) was proposed as a mechanism for importance-based pruning for the easy extraction of sub-networks devised to allow for heterogeneous federated training~\citep{horvath2021fjord}. \red{Similar constructions were proposed to be applied in the representation layer of autoencoders~\citep{rippel2014learning} to enforce identifiability of the learned representation or the last layer of the feature extractor~\citep{horvath2021hyperparameter} to learn an ordered set of features for transfer learning.}
\red{Contrary to prior work, \tool's \technique non-trivially extends ordered representation in three meaningful ways. First, it is the first work that is applied to the decomposed network, which gets progressively shrunk as redundant ranks converge to zero. This is achieved through hierarchical group lasso penalty, as described in Sec.~\ref{sec:training}. Second, \technique allows for heterogeneous (non-uniform) ranks per layer, yielding a much richer operating space. Last, through \technique we can leverage the ordered representation of ranks at inference time to further compress the model, allowing a graceful accuracy-latency tradeoff for deployment on more constrained devices, without the need to retrain.}

\vspace{-0.3cm}
\section{\tool}
\label{sec:tool}
In this work, we focus on low-rank models as a technique to reduce the neural network model's computational complexity and memory requirements. The main challenge that we face is the selection of the optimal rank or the trade-off between the efficiency and the rank for the given layer. Therefore, we devise an \emph{importance-based} training technique, \tool, which learns not only a mapping between features and responses but also \emph{learns the decomposition} of the trained network. This is achieved by factorizing all the layers in the network.

\subsection{Formulation}

\noindent
\textbf{Low-rank approximation.} Our inspiration comes from the low-rank matrix approximation of a matrix $A \in \R^{m \times n}$. For simplicity, we assume that $A$ has {rank at most $r = \min\cbr{m, n}$ with $k \leq r$} distinct non-zero singular values $\tilde{\sigma}_1 > \tilde{\sigma}_2 > \hdots > \tilde{\sigma}_k > 0$, with corresponding left and right singular vectors $\tilde{u}_1, \tilde{u}_2, \hdots, \tilde{u}_k \in R^m$ and $\tilde{v}_1, \tilde{v}_2, \hdots, \tilde{v}_k \in R^n$, respectively. For such a matrix, we can rewrite its best $l$-rank approximation as the following minimization problem
{%
\begin{align}
    \label{eq:svd_optim}
    & \textstyle\min_{U \in \R^{m \times l}, V \in \R^{n \times l}} \norm*{\textstyle\sum_{i=1}^l u_i v_i^\top - A}_F^2
\end{align}}
where $c_i$ denotes the $i$-th column of matrix $C$ and $\norm{\cdot}_F$ denotes Frobenius norm. We note that Problem~\eqref{eq:svd_optim} is non-convex and non-smooth. However, \cite{ye2021global} showed that the randomly initialized gradient descent algorithm solves this problem in polynomial time. In this work, we consider the best rank approximation across all the ranks. Eckart–Young–Mirsky theorem leads to the following objective
{%
\begin{align}
    \label{eq:od_optim}
    \begin{split}
        &\textstyle\min_{U \in \R^{m \times r}, V \in \R^{n \times r}} \frac{1}{r} \sum_{b=1}^r \norm*{U_{:b}V_{:b}^\top - A}_F^2,
    \end{split}
\end{align}}
where $C_{:b}$ denotes the first $b$ columns of matrix $C$. This objective, up to scaling, recovers SVD of $A$ exactly, and for the case of distinct non-zero singular values, the solution is, up to scaling, unique~\cite{horvath2021fjord}. This formulation, however, does not account for the data distribution, \textit{i.e., 
it cannot tailor the decomposition to capture specific structures that appear in the dataset.}

\textbf{LoD for data-dependent low-rank approximation.}
Therefore, the next step of our construction is to extend this problem formulation with data that can further improve compression, reconstruction, and generalization and incorporate domain knowledge. We assume that data comes from the distribution $x \sim \cX$ centered around zero, i.e., $\EE{x \sim \cX}{x} = 0$.\footnote{We make this assumption for simplicity. It can be simply overcome by adding a bias term into the model.}, and the response is given by $y = Ax$. In this particular case, we can write the training loss as
{%
\begin{align}
    \label{eq:low_rank_optim}
    \begin{split}
    \textstyle
        &\min_{U \in \R^{m \times r}, V \in \R^{n \times r}} \EE{x, y \sim \cX}{\sum_{b=1}^r \frac{1}{r}\norm*{U_{:b}V_{:b}^\top x - y}^2}.
    \end{split}
\end{align}}
\red{It is important to note that the introduced problem formulation~\eqref{eq:low_rank_optim} for the neural network with a single hidden layer and no activations can be solved using stochastic algorithms by sampling from the data distribution $\cX$ (subsampling) and rank distribution $\cD$. When we apply \technique to DNNs, contrary to any prior work \citep{horvath2021fjord,rippel2014learning,diao2021heterofl}, our formulation is the first one to apply importance-based ordered dropout to the decomposed representation of each layer, thus preserving the dimensionality. More importantly, we decompose each layer \emph{independently} and allow for finding the optimal rank per layer in a data-informed manner. We discuss details in the next paragraph.}

\textbf{DNN low-rank approximation.} For Deep Neural Networks (DNNs), we seek to uncover the optimal ranks for a set of $d$ linear mappings $W^1 \in \R^{m_1 \times n_1}, \hdots, W^d \in \R^{m_d \times n_d}$ , where $W^i$'s are model parameters and {$d$ is model depth}, e.g., weights corresponding to linear layers\footnote{We can apply our decomposition on different types of layers, such as Linear, Convolutional and Transformers as shown in Sec.~\ref{sec:layer_factorization}.}, by decomposing them as $W^i = U^i\left(V^i\right)^\top$. 
We discuss how these are selected in the next section. To decompose the network, we aim to minimize the following objective
\begin{align}
    \label{eq:low_rank_ful}
    \begin{split}
    \textstyle\mathbf{E}_{x, y \sim \cX}&\left[\frac{1}{\sum_{i = 1}^d r_i} \sum_{i=1}^d \sum_{b=1}^{r_i}l(h(U^1\rbr*{V^1}^\top, \hdots, \right. \\
        &\left. U^i_{:b}\rbr*{V^i_{:b}}^\top, \hdots, U^d\rbr*{V^d}^\top, W^{o}, x), y)\right],
    \end{split}
\end{align}
where $r_i = \min\cbr*{m_i, n_i}$, $l$ is a loss function, $h$ is a DNN, and $W^{o}$ are the other weights that we do not decompose. We note that our formulation aims to decompose each layer, while decompositions across layers do not directly interact. The motivation for this approach is to uncover low-rank structures within each layer that are not affected by inaccuracies from other layers due to multiple low-rank approximations.

\SetArgSty{textnormal} %
\begin{algorithm2e}[t]
    \footnotesize
    \SetAlgoLined
	\LinesNumbered
	\DontPrintSemicolon
	\KwIn{epochs $E$, dataset $\cD$, model $h$ parametrized by $U^1 \in \R^{m_1 \times r_1}$, $V^1 \in \R^{n_1 \times r_1}, \hdots, U^d \in \R^{m_d \times r_d}, V^d \in \R^{n_d \times r_d}$, $W^o$, and hyperparameters $\lambda_{gl}$, $\epsilon_{ps}$}
	
	\For( // \textit{Epochs}){$t \gets 0$ \KwTo $E-1$}{
	    \For( // \textit{Iterate over dataset}){$(x,y) \in \cD$}{
                Sample $(i, b) \sim \cbr*{\cbr*{(i, b)}_{b=1}^{r_i}}_{i=1}^d$; // \textit{LoD} \;
                $L = l(h(U^1\rbr*{V^1}^\top, \hdots, U^i_{:b}\rbr*{V^i_{:b}}^\top, \hdots, U^d\rbr*{V^d}^\top, W^{o}, x)$,
                $\quad y) $
                $+ \lambda_{gl} \sum_{i=1}^d \sum_{b=1}^{r_i}  \rbr*{\norm*{U^i_{b:}} + \norm*{V^i_{b:}}}$ // \textit{Loss}\; 
                L.backward() // \textit{Update weights}\;
                }
            \For {$i \gets 1$ \KwTo $d$} {
                \For {$b \gets 1$ \KwTo $r_i$} {
            // \textit{rank importance thresholding} \;
            \If{$\norm*{V^i_{b:}} \norm*{U^i_{b:}} \leq \epsilon_{ps}$} { 
                $r_i = b-1$ // \textit{progressive shrinking} \;
                \textbf{break}\;
                }
            }
        }
    \caption{\footnotesize \mbox{\textbf{\tool} (Training Process)}}
    \label{alg:training}
	}
\end{algorithm2e}

\subsection{Layer Factorization}
\label{sec:layer_factorization}
The following sections discuss how we implement model factorization for different architectures. 

\noindent\textbf{FC layers.}
A 2-layer fully connected (FC) neural network can be expressed as $f(x) = \sigma(\sigma(x W_1) W_2)$, where $W$s are weight matrices of each FC layer, and $\sigma(\cdot)$ is any arbitrary activation function, e.g., ReLU. The weight matrix $W$ can be factorized as $U V^\top$.

\noindent\textbf{CNN layers.} 
For a convolution layer with dimension, $W\in \mathbb{R}^{m \times n \times k \times k}$ where $m$ and $n$ are the number of input and output channels, and $k$ is the size of the convolution filters. Instead of directly factorizing the $4$D weight of a convolution layer, we factorize the unrolled $2$D matrix. Unrolling the 4D tensor $W$ leads to a 2D matrix with shape $W_{\text{unrolled}} \in \mathbb{R}^{mk^2 \times n}$, where each column represents the weight of a vectorized convolution filter. Factorization can then be conducted on the unrolled 2D matrix; see \citep{wang2021pufferfish} for details.

\noindent\textbf{Transformers.}
A Transformer layer consists of a stack of encoders and decoders~\cite{vaswani2017attention}. The encoder and decoder contain three main building blocks: the multi-head attention layer, position-wise feed-forward networks (FFN), and positional encoding. We factorize all trainable weight matrices in the multi-head attention (\rebb{MHA}) and the FFN layers. {The FFN layer factorization can directly adopt the strategy from the FC factorization.} A $p$-head attention layer learns $p$ attention mechanisms on the key, value, and query ($K, V, Q$) of each input token:
{\setlength{\abovedisplayskip}{0pt}\setlength{\belowdisplayskip}{0pt}
\begin{align*}
\texttt{MHA}(Q, K, V) = \texttt{Concat}(\text{head}_1, \dots, \text{head}_p) W^O.
\end{align*}
}
Each head performs the computation of: 
{\setlength{\abovedisplayskip}{0pt}\setlength{\belowdisplayskip}{0pt}
\begin{align*}
\texttt{head}_i &= \texttt{Attention}(Q W^{(i)}_Q, K W^{(i)}_K, V W^{(i)}_V) \\
                &= \texttt{softmax}\left(\frac{Q W^{(i)}_Q W^{(i)\top}_K K^\top}{\sqrt{d/p}}\right) V W^{(i)}_V.
\end{align*}}
where $d$ is the hidden dimension. The trainable weights $W^{(i)}_Q, W^{(i)}_K, W^{(i)}_V, i \in \{1, 2, \dots, p\}$ can be factorized by simply decomposing all learnable weights $W{\cdot}$ in an attention layer and obtaining $U{\cdot} V^\top{\cdot}$~\cite{vaswani2017attention}.

\vspace{-0.2cm}
\subsection{Training Techniques}
\label{sec:training}
Having defined the decomposition of typical layers found in DNNs, we move to formulate the training procedure of our method, formally described in Algorithm~\ref{alg:training}. 
Training the model comprises an iterative process of propagating forward on the model by \emph{sampling a rank} $b_i$ per decomposed layer $i$ up to maximal rank $r_i$ (line 3). We calculate the loss, which integrates an additional \emph{hierarchical group lasso} component (lines 4) and \emph{backpropagate} on the sampled decomposed model (line 5). At the end of each epoch, we \emph{progressively shrink} the network by updating the maximal rank $r_i$, based on an importance threshold $\epsilon_{ps}$ (line 11). We provide more details about each component below.

\noindent \textbf{Efficient training via sampling.} In Sec.~\ref{sec:theory}, we show that for the linear case \eqref{eq:low_rank_optim}, the optimal solution corresponds to PCA over the linearly transformed dataset. This means that the obtained solution contains \rebb{orthogonal} directions. This property is beneficial because it directly implies that when we employ gradient-based optimization, not only is the gradient zero at the optimum, but the gradient with respect to each summand in Equation \eqref{eq:low_rank_optim} is also zero. The same property is directly implied by overparametrization~\cite{ma2018power} or strong growth condition~\cite{schmidt2013fast}. 
As a consequence, this enables us to sample only one summand at a time and obtain the same quality solution. When considering \eqref{eq:low_rank_ful} as an extension to \eqref{eq:low_rank_optim}, it is unclear whether this property still holds, which would also imply that the set of stationary points of \eqref{eq:low_rank_optim} is a subset of stationary points of the original objective without decomposition. However, in the experiments, we observed that sampling is sufficient to converge to a good-quality solution. If this only holds approximately, \rebb{one} could leverage fine-tuning to recover the loss in performance. 

\noindent\textbf{Efficient rank extraction via hierarchical group-lasso.} By definition, \eqref{eq:low_rank_optim} leads to an ordered set of ranks for each layer. This ordered structure enables efficient rank extraction and selection. To effectively eliminate unimportant ranks while retaining the important ones, thus leading to a more efficient model, we consider Hierarchical Group Lasso (HGL)~\cite{lim2015learning} in the form
{%
\begin{align}
    \label{eq:group_lasso}
    \lambda_{gl} \sum_{i=1}^d \sum_{b=1}^{r_i}  \rbr*{\norm*{U^i_{b:}} + \norm*{V^i_{b:}}},
\end{align}
}
where $C_{b:}$ denotes the matrix that contains all the columns of $C$ except for the first $b-1$ columns.

\noindent\textbf{Progressive shrinking.} HGL encourages that unimportant ranks become zero and can be effectively removed from the model. To account for this, for each layer we remove $V^i_{b:}$ and $U^i_{b:}$ (i.e., set $r_i = b - 1$) if $\norm*{V^i_{b:}} \norm*{U^i_{b:}} \leq \epsilon_{ps}$, where $\epsilon_{ps}$ is a pre-selected threshold -- and a hyperparameter of our method.

\noindent\camready{\textbf{Hyperparameter optimization.} We provide an algorithm for finding the optimal value for the hyperparameter $\lambda_{gl}$ in Alg.~\ref{alg:hpo}. From the evaluation of Tab.~\ref{tab:lenet_gp_lambda}-\ref{tab:resnet50_gp_lambda}, this strategy typically requires at most 2-3 times the computational effort (in terms of FLOPs) compared to a single training loop with an optimally chosen $\lambda_{gl}$. This is significantly easier than tuning the per-layer maximal rank and the pretraining steps, where the full-rank model is being pretrained, as is the case in other low-rank baselines. Equivalently, the value of $\epsilon_{ps}$ represents the effective zero-point in our algorithm, typical value of which was $1e-7$ in our experiments.}

\SetArgSty{textnormal} %
\begin{algorithm2e}[h]
    \footnotesize
    \SetAlgoLined
	\LinesNumbered
	\DontPrintSemicolon
	\KwIn{constraints (e.g., min required accuracy, max \#parameters), epochs $E$, dataset $\cD$, model $h$, evaluation frequency $E^\text{eval}_\text{every}$, $\epsilon_{ps}$, limits for HPO: \textit{largeValue}, \textit{smallValue}}
        $\lambda_{gl}$ = \textit{largeValue} \;
        old\_model = NULL \;
        \While{$\lambda_{gl}$ > \textit{smallValue}} {
            model = RandInit(model) \;
            \For( // \textit{Epochs}){$t \gets 0$ \KwTo $E-1$} {
                Train(model, dataset, $\lambda_{gl}$)\;
                \If {$t$ mod $E^\text{eval}_\text{every}$ == 0} {
                    acc = CalculateAcc(model, dataset)\;
                    flops, params = MeasureFootprint(model, $\epsilon_{ps}$)\;
                    \If {acc $\geq$ constraints[`acc'] \textbf{and} params $\leq$ constraints[`params']} {
                        \Return model \textit{// model satisfying constraints was found} \;
                    }
                    \If{params $<$ few\_params}{
                        \textbf{break} \textit{// model too sparse} \;
                    }
                }
            }
            flops, params = MeasureFootprint(model, $\epsilon_{ps}$)\;
            \If{params > constraints[`params']} {
                \textit{/** no model satisfies constraints, return the last model satisfying parameters constraints**/} \;
                \Return old\_model  \;
            }
            old\_model = Copy(model)\;
            $\lambda_{gl}$ = $\lambda_{gl}$ / 2\;
        }

    \caption{\footnotesize \mbox{\textbf{\tool} (Hyper-parameter optimization)}}
    \label{alg:hpo}
\end{algorithm2e}

\noindent\textbf{Initialization.}
Initialization is a key component of the training procedure~\cite{he2015delving, mishkin2015all}. To adopt the best practices from standard non-factorized training, we follow a similar approach to \cite{khodak2021initialization, wang2021pufferfish}, where we first initialize the non-factorized model using standard initialization. For initializing factorized layers, we use the \rebb{Singular Value Decomposition} of the non-factorized initialization -- in a full-rank form -- to ensure that the resulting product matrix is the same as the original parameter decomposition. In addition, SVD is an optimal decomposition for the linear case with uniform data. {However, in contrast with the adaptive baseline method~\citep{wang2023cuttlefish} we only decompose once, rather than on every training iteration.} \red{As such, we only run decomposition once and progressively shrink the ranks in a data-centric manner. This is contrary to related work~\cite{wang2021pufferfish,wang2023cuttlefish} that requires manual rank and layer selection and full-rank warmup to achieve the desired performance, at the cost of training overhead, of course.}

\begin{figure*}[t]
  \vspace{-0.2cm}
  \centering
  \begin{subfigure}[t]{0.48\textwidth}
    \centering
    \includegraphics[width=.7\textwidth]{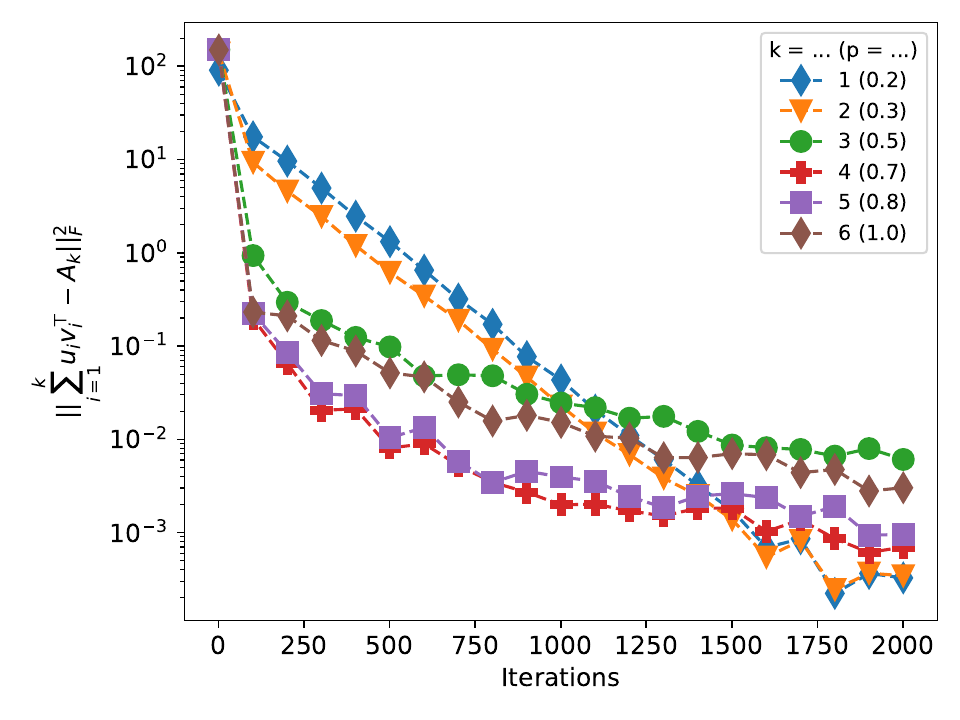}
    \vspace{-0.4cm}
    \captionsetup{font=small,labelfont=bf}
    \caption{Verification that \tool recovers SVD for linear mapping with uniform data. We display the L2 distance between the best rank $k$ and \tool's approximation of mapping $A$. 
    The target matrix was randomly generated $9 \times 6$ matrix with rank $3$. $p$ and $k$ represent relative and actual rank.}
    \label{fig:svd}
  \end{subfigure}
  \hfill
  \begin{subfigure}[t]{0.48\textwidth}
    \centering
    \includegraphics[width=.7\textwidth]{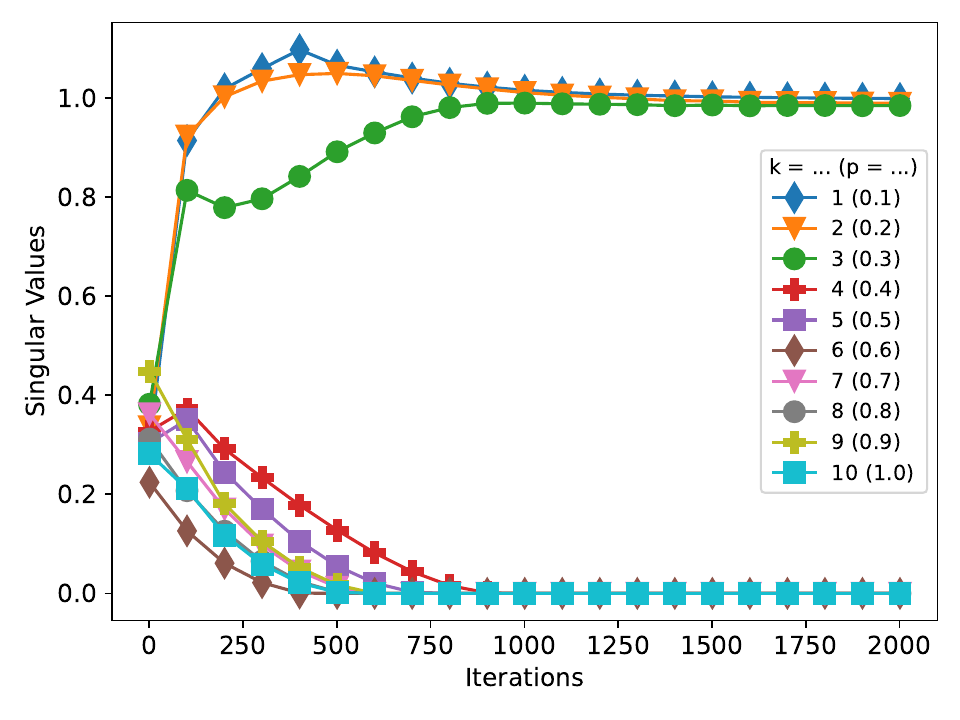}
    \captionsetup{font=small,labelfont=bf}
    \vspace{-0.4cm}
    \caption{Verification that \tool recovers PCA for identity mapping. The plot displays the estimates of singular values. The data distribution has only 3 directions. It is expected that the top $3$ ranks will converge to value one and the rest to zero. $p$ and $k$ stand for relative and actual rank, respectively.}
    \label{fig:pca}
  \end{subfigure}
  \captionsetup{font=small,labelfont=bf}
  \vspace{-0.2cm}
  \caption{Empirical showcase of theoretical properties of the \tool's formulation.} %
  \vspace{-0.2cm}
  \label{fig:both_figures}
\end{figure*}

\vspace{-0.2cm}
\subsection{Train-Once, Deploy-Everywhere}
\label{sec:deploy}

Up until now, we have described how our method works for training low-rank models, which yield computational, memory, network, and energy~\citep{wu2022sustainable} bandwidth benefits during training.
At deployment time, one can directly deploy the final model (rank $r_i$ for each layer) on the device, which we acquire from performing a threshold sweep of $\epsilon_{ps}$ over the effective range of rank importance across layers.
However, in case we want to run on even more constrained devices, such as mobile or embedded~\citep{almeida2021smart} systems, the learned decomposition also gives us the flexibility to further compress the model in a straightforward manner, effectively trading off accuracy for a smaller model footprint. Inspired by \cite{yu2019autoslim}, we propose to use \emph{greedy search}. We begin with the current model and compare model performance across various low-rank models, each created by removing a certain percentage of ranks from each layer. We then eliminate the ranks that cause the least decrease in performance. This process is iterated until we reach the desired size or accuracy constraint. To make this approach efficient, we estimate the loss using a single mini-batch with a large batch size (e.g.,~2048). This also avoids issues with BatchNorm layers; see \cite{yu2019autoslim} for details.

In summary, \tool comprises a technique for trainable low-rank approximation during training time that progressively compresses the model, reflecting the data distribution, and a method that enables a graceful trade-off between accuracy and latency for embedded deployment, by selecting the most important parts of the network. We validate these claims in Sec.~\ref{sec:performance_comparison} and~\ref{sec:tradeoff}, respectively.

\vspace{-0.3cm}
\section{Theoretical Guarantees}
\vspace{-0.2cm}
\label{sec:theory}

\begin{table*}
\begin{minipage}{0.6\linewidth}
    \centering
    \vspace{-0.25cm}
    \centering
    \begin{subfigure}[t]{0.49\linewidth}
        \centering
        \includegraphics[width=\textwidth]{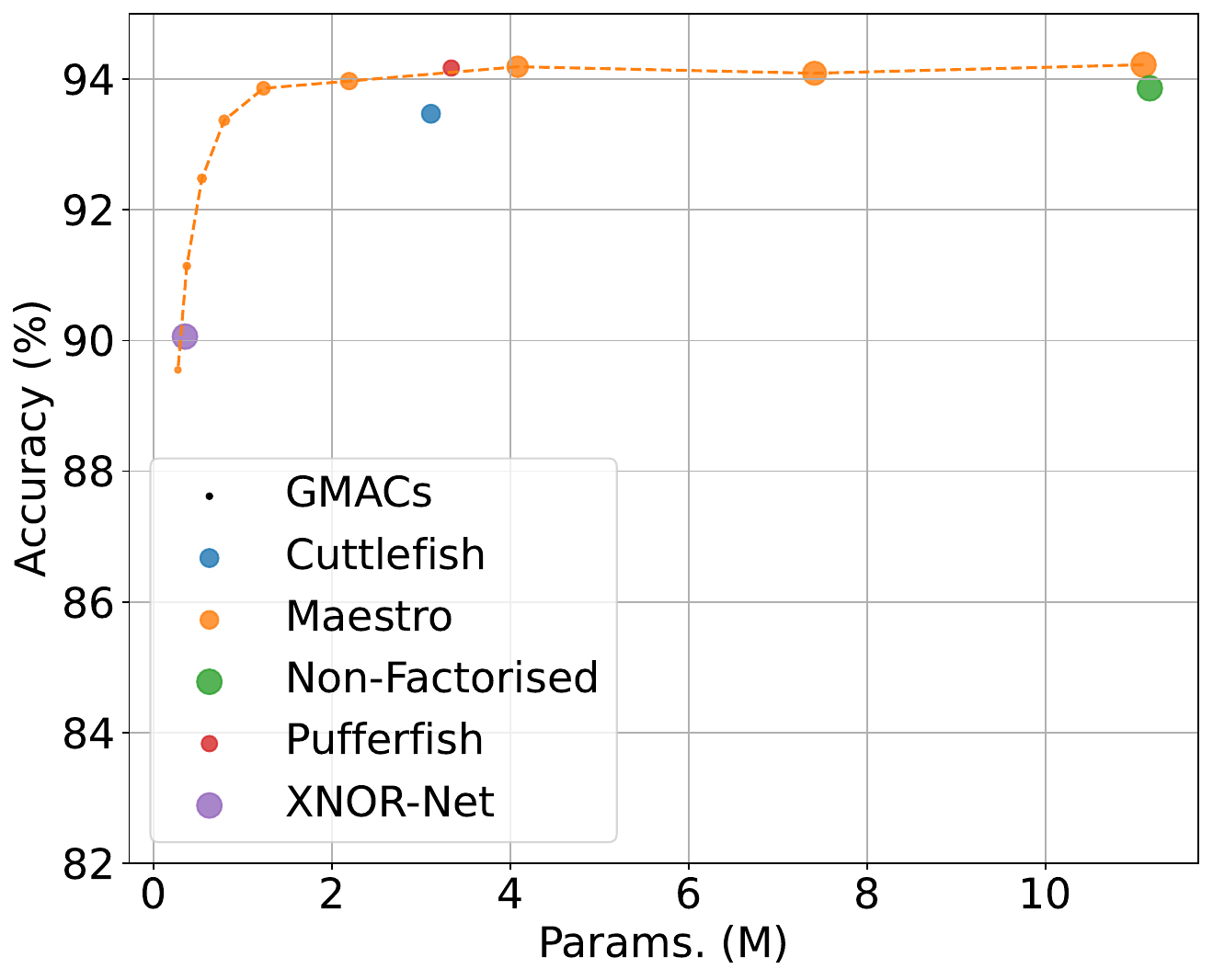}
        \vspace{-0.6cm}
        \captionsetup{font=small,labelfont=bf}
        \caption{ResNet18.}
        \label{fig:accuracy_size_resnet18}
    \end{subfigure}
    \begin{subfigure}[t]{0.49\linewidth}
        \centering
        \includegraphics[width=\textwidth]{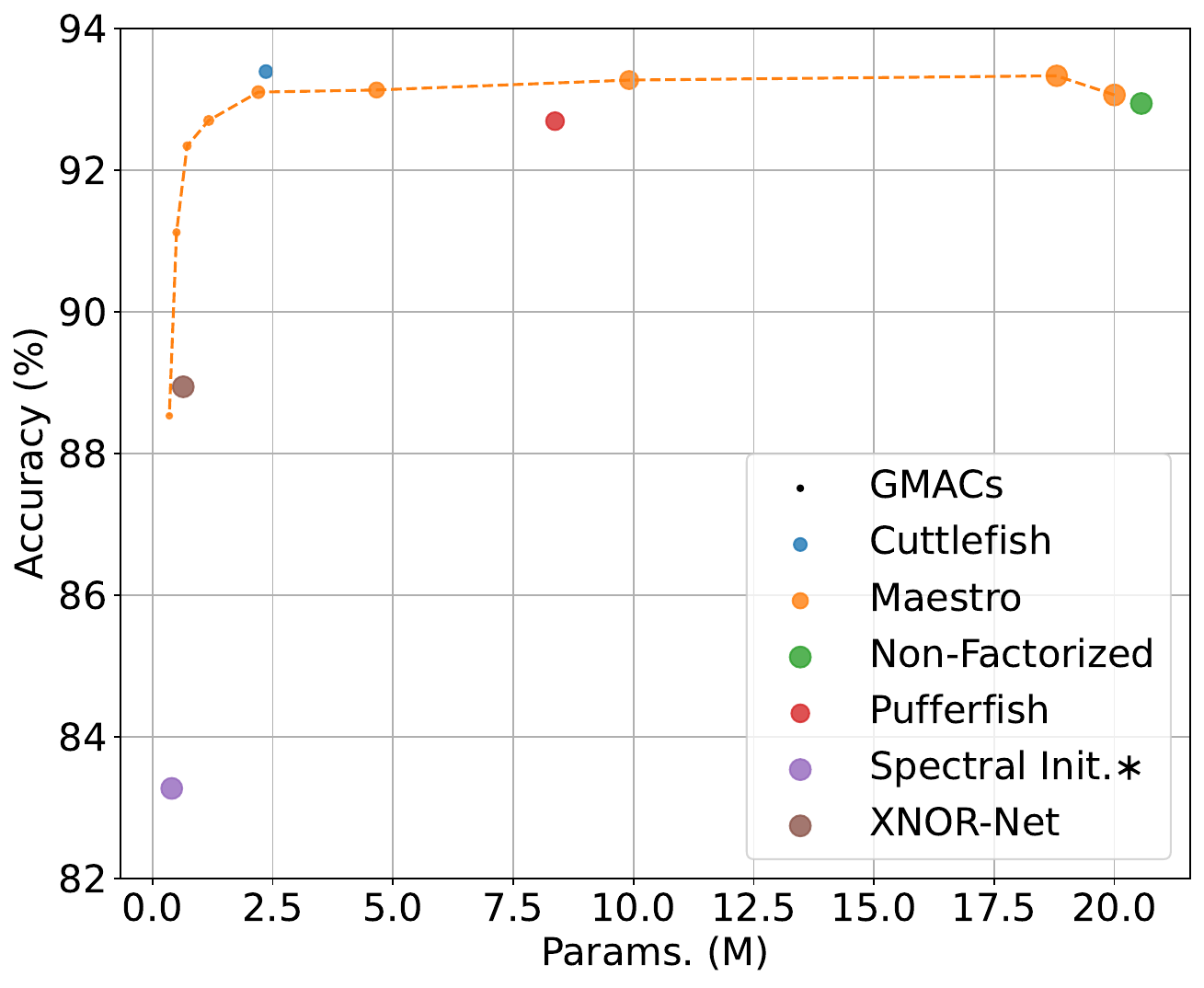}
        \captionsetup{font=small,labelfont=bf}
        \vspace{-0.6cm}
        \caption{VGG-19.}
        \label{fig:accuracy_size_vgg19}
    \end{subfigure}
    \vspace{-0.3cm}
    \captionsetup{font=small,labelfont=bf}
    \captionof{figure}{Maestro vs. baselines on CIFAR10. Spectral-Init results is taken from the original work; For XNOR-Net each weight is quantized from 32 to 1-bit. Thus, we report a compression rate of $3.125\%$; 
    Detailed results are presented in table form in the Appendix~\ref{app:detailed_baselines}.}
    \label{fig:cifar10_baselines}
    \vspace{-0.4cm}
\end{minipage}
\begin{minipage}{0.35\linewidth}
        \begin{minipage}[t]{\linewidth}
        \centering
        \captionsetup{font=small,labelfont=bf}
        \caption{Datasets and models for evaluation. The footprints depict the vanilla models.}
        \vspace{-0.2cm}
        \begin{adjustbox}{width=1.1\linewidth}
        {
            \begin{tabular}{llrrl}
                \toprule
               \rowcolor{Gray} Dataset     & Model                          & \# GMACs      & \# Params (M)     & Task \\ \midrule
                \textbf{MNIST}       & LeNet       & $2e^{-4}$              & $0.04$                   & Image classification \\
                \textbf{CIFAR10}     & ResNet-18              & $0.56$           & $11.18$               & Image classification \\
                \textbf{CIFAR10}     & VGG-19              & $0.40$           & $20.00$               & Image classification \\
                \textbf{\red{ImageNet}}    & ResNet-50       & $4.12$          & $25.56$               & Image classification      \\
                \textbf{Multi30k }      & 6-layer Transformer & $1.37$              & $48.98$               & Translation (en-ge) \\
                \bottomrule
            \end{tabular}
        }
        \end{adjustbox}
        \label{tab:datasets-models}
    \end{minipage}
    \begin{minipage}[b]{\linewidth}
    \centering
    \captionsetup{font=small,labelfont=bf}
    \vspace{0.2cm}
    \caption{Maestro vs. baselines on Multi30k.}
    \vspace{-0.2cm}
    \label{tab:transformer_baselines}
    \scalebox{0.6}{
        \begin{tabular}{lllll}
            \toprule
            \rowcolor{Gray} Variant                         & Model       & Perplexity         & GMACs & Params. ($M$) \\ 
            \midrule
            Non-factorized                  & Transformer & 9.85{\tiny$\pm0.10$} & 1.370 & 53.90 \\
            Pufferfish$^*$                  & Transformer & 7.34{\tiny$\pm0.12$} & 0.996 & 26.70\\
            \tool{}$^\dagger$ & Transformer & 
            \textbf{6.90{\tiny$\pm0.07$}} & \textbf{0.248} & \textbf{13.80} \\
            \bottomrule
            \multicolumn{5}{l}{{$^*$Results from original work; $^\dagger$ tuned $\lambda_{gp}$ from $\{2^{i}/100; i \in {0, \hdots, 9}\}$}} 
        \end{tabular}}
        \end{minipage}
\end{minipage}

\end{table*}

In this section, we further investigate the theoretical properties of \tool for the linear mappings, i.e., the setup of the problem formulation~\eqref{eq:low_rank_optim}. 
\begin{theorem}[Informal]
\label{thm:linear_tool_is_pca}
Let $A = \tilde{U} \tilde{\Sigma} \tilde{V}^\top$ be a SVD decomposition of $A$. Then, the minimization problem \eqref{eq:low_rank_optim} is equivalent to PCA applied to the transformed dataset $x \rightarrow \tilde{\Sigma} \tilde{V}^\top x$, $x \sim \cX$ projected on the column space of $\tilde{U}$. 
\end{theorem}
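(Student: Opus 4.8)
The plan is to collapse the stochastic, ordered objective \eqref{eq:low_rank_optim} into a deterministic \emph{weighted} low-rank approximation problem, solve it with Eckart--Young exactly as the paper does for \eqref{eq:od_optim}, and then recognize the minimizer as PCA of the transformed data lifted back through $\tilde U$. First I would introduce the (uncentered) covariance $\Sigma_{\cX} = \mathbb{E}_{x\sim\cX}[xx^\top]$ and, using $y=Ax$ and $\mathbb{E}_x[\|Mx-Ax\|^2] = \Tr\big((M-A)\Sigma_{\cX}(M-A)^\top\big) = \| (M-A)\Sigma_{\cX}^{1/2}\|_F^2$ with $M=U_{:b}V_{:b}^\top$, rewrite \eqref{eq:low_rank_optim} as
\[
\min_{U,V}\ \frac1r\sum_{b=1}^r \big\| M_b \Sigma_{\cX}^{1/2} - A\Sigma_{\cX}^{1/2} \big\|_F^2,\qquad M_b := U_{:b}V_{:b}^\top .
\]
Assuming for now $\Sigma_{\cX}\succ 0$, the substitution $N_b := M_b \Sigma_{\cX}^{1/2}$ is a bijection on rank-$\le b$ matrices, so with $B := A\Sigma_{\cX}^{1/2}$ each summand becomes the plain Frobenius distance from a rank-$\le b$ matrix to $B$.

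Next I would invoke Eckart--Young. Writing the SVD $B=\sum_j \rho_j p_j q_j^\top$ with $\rho_1\ge\rho_2\ge\cdots$, the best rank-$b$ approximation is the truncation $B_b=\sum_{i\le b}\rho_i p_i q_i^\top$, and these truncations are \emph{nested}: $B_b$ extends $B_{b-1}$ by one rank-one term. Hence a single SVD of $B$ supplies a choice of $U,V$ (columns proportional to $p_i$ and $\Sigma_{\cX}^{-1/2}q_i$) that simultaneously minimizes every summand, and therefore the ordered average. This is the step that makes the ordered objective tractable and is the direct analogue of the SVD-recovery argument already used for \eqref{eq:od_optim}; the only new ingredient is the weighting by $\Sigma_{\cX}^{1/2}$.

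Finally I would identify this SVD with PCA of the transformed data. Let $z=\tilde\Sigma\tilde V^\top x$, so that $y=\tilde U z$ and $\Cov(z)=\tilde\Sigma\tilde V^\top\Sigma_{\cX}\tilde V\tilde\Sigma$. The key computation is $BB^\top = A\Sigma_{\cX}A^\top = \tilde U\,\Cov(z)\,\tilde U^\top$, using $A=\tilde U\tilde\Sigma\tilde V^\top$ and $\tilde U^\top\tilde U=I$. Thus the left singular vectors of $B$ are exactly $p_i=\tilde U w_i$, where the $w_i$ are the principal directions (eigenvectors of $\Cov(z)$) and $\rho_i^2$ the corresponding variances. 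Substituting $\rho_i q_i^\top = p_i^\top B = w_i^\top\tilde\Sigma\tilde V^\top\Sigma_{\cX}^{1/2}$ makes the optimal reconstruction telescope to
\[
U_{:b}V_{:b}^\top x \;=\; \sum_{i=1}^b p_i\,(w_i^\top z) \;=\; \tilde U P_b z,
\]
with $P_b=\sum_{i\le b}w_i w_i^\top$ the rank-$b$ PCA projector for $z$. This is precisely PCA applied to $x\mapsto\tilde\Sigma\tilde V^\top x$, followed by projection onto the column space of $\tilde U$, which is the claimed equivalence.

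The main obstacle I anticipate is rigor around degeneracies. When $\Sigma_{\cX}$ is only positive semidefinite (as in the PCA experiment where the data spans few directions), the substitution $N_b=M_b\Sigma_{\cX}^{1/2}$ is no longer a bijection, so one must restrict to $\range{\Sigma_{\cX}}$, replace $\Sigma_{\cX}^{-1/2}$ by the pseudoinverse $\Sigma_{\cX}^{+1/2}$, and verify that components of $V$ orthogonal to the data support are immaterial to the loss. A second delicate point is justifying that minimizing the ordered sum coincides with simultaneously minimizing each term; this hinges on the nested Eckart--Young truncations being a \emph{common} minimizer of all $r$ terms, so that no tradeoff arises. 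Uniqueness of the recovered decomposition (up to the usual sign ambiguity) then requires the distinct-singular-value assumption already invoked in the paper.
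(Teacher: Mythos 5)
Your proposal is correct, but it reaches the result by a genuinely different route than the paper. The paper's proof never introduces the data covariance: it repeatedly reparametrizes the optimization variables using the orthogonality of $\tilde{U}$ and $\tilde{V}$, reduces the objective to the identity-target form $\sum_b \big\lVert \rbr{U_{:b}V_{:b}^\top - I}\tilde{\Sigma}\tilde{V}^\top x\big\rVert^2$ after truncating to the top-left $k\times k$ block of $\tilde{\Sigma}$ (where $k=\rank(A)$), and then invokes the result of Rippel et al.\ that this nested autoencoder objective is, up to scaling, equivalent to PCA of $\tilde{\Sigma}\tilde{V}^\top x$. You instead absorb the data distribution into a weighted Frobenius norm via $\Sigma_{\cX}^{1/2}$, which collapses the stochastic objective to exactly the deterministic ordered problem \eqref{eq:od_optim} with target $B=A\Sigma_{\cX}^{1/2}$, solve it by the nestedness of Eckart--Young truncations, and identify the left singular vectors of $B$ with the principal directions of $z=\tilde{\Sigma}\tilde{V}^\top x$ through $BB^\top=\tilde{U}\,\Cov(z)\,\tilde{U}^\top$; your telescoping computation $U_{:b}V_{:b}^\top x=\tilde{U}P_b z$ is a correct and more explicit closed form for the optimal map than anything the paper writes down. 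What your route buys is self-containedness (no appeal to the external nested-autoencoder-equals-PCA lemma beyond the simultaneous-minimization argument the paper already uses for \eqref{eq:od_optim}) and transparency about where the data enters; what it costs is the extra care you correctly flag when $\Sigma_{\cX}$ is singular, a degeneracy the paper sidesteps because it only ever inverts $\tilde{\Sigma}_{:k,:k}$ (handling rank-deficiency of $A$, not of the data) and delegates the data-degenerate case to the cited PCA result. Both arguments share the same uniqueness caveat under repeated singular values.
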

The formal statement can be found in Appendix~\ref{app:theory}. Theorem~\ref{thm:linear_tool_is_pca} shows that \tool can adapt to data distribution by directly operating on data $x \sim \cX$ and also to the target mapping by projecting data to its right singular vectors scaled by singular values. In particular, we show that in the special case, when $\cX$ is the uniform distribution on the unit ball, \eqref{eq:low_rank_optim}, i.e., \tool, exactly recovers truncated SVD of $A$, which is consistent with the prior results~\cite{horvath2021fjord}. In the case $A$ is the identity, it is straightforward to see that \tool is equivalent to PCA. We can see that \tool can efficiently extract low-rank solutions by filtering out directions corresponding to the null space of the target mapping $A$ and directions with no data.
We also numerically verify both of the special cases--PCA and SVD, by minimizing \eqref{eq:low_rank_optim} using stochastic gradient descent (SGD) with $\cD$ being the uniform distribution. These experiments are provided in Fig.~\ref{fig:svd} and \ref{fig:pca}. \camready{We provide further evidence on the adaptivity of \tool in Appendix~\ref{sec:correct_ordering} and~\ref{sec:rank_adaptivity}.}

We showed that \tool could recover SVD in a particular case of the linear model and the uniform data distribution on the unit ball. We note that in this case, SVD is optimal, and we cannot acquire better decomposition. Therefore, it is desired that \tool is equivalent to SVD in this scenario. More generally, we argue that \tool decomposition should be preferable to SVD due to the following reasons:
\begin{itemize}[leftmargin=1em,noitemsep,topsep=-1pt]
    \item \tool formulation is \emph{directly built into the training} and tailored to obtain the best low-rank decomposition, while SVD relies on linearity assumption.
    \item SVD does not account for data, and even in the linear NN case, the learned singular vectors might exhibit wrong ordering. We demonstrate this issue using a simple example where we take matrix $A$ with rank $3$. We construct the dataset $\cX$ in such a way that the third singular vector is the most important, the second one is the second, and the first is the third most important direction. Clearly, SVD does not look at data. Therefore, it cannot capture this phenomenon. We showcase that \tool \emph{learns the correct order}; see Fig.~\ref{fig:svd_wrong} of the Appendix.
    \item Pre-factorizing models allow us to apply \emph{hierarchical group-lasso penalty}~\citep{yuan2006model} for decomposed weights to directly regularize the rank of different layers.
    \item SVD is computationally expensive and can only run rarely, while \tool is \emph{directly built into the training} and, therefore, \emph{does not require extra computations.} In addition, \tool supports rank sampling so training can be made computationally efficient. 
\end{itemize}

\section{Experiments}
We start this section by describing the setup of our experiments, including the models, datasets and baselines with which we compare \tool. We then compare \tool against the baselines on accuracy and \rebb{training Multiply-Accumulate operations (MACs)} and discuss the results.
Subsequently, we analyze the behaviour of our system in-depth and provide additional insights on the performance of our technique, along with an ablation study and sensitivity analysis to specific hyperparameters. Finally, we showcase the performance of models upon deployment and how we can derive a smaller footprint model with some accuracy trade-off, without the need to fine-tune.

\vspace{-0.1cm}
\subsection{Experimental Setup}

\noindent\textbf{Models \& datasets.}
The datasets and models considered in our experiments span across four datasets, concisely presented along with the associated models on Tab.~\ref{tab:datasets-models}. We have implemented our solution in PyTorch~\citep{paszke2017automatic}(v1.13.0) trained our models 
on NVidia A100 (40G) GPUs. Details for the learning tasks and hyperparameters used are presented in Appendix~\ref{sec:app_experimental_setup}.

\noindent\textbf{Baselines.}
We have selected various baselines from the literature that we believe are closest to aspects of our system. On the \textit{pruning} front, we compare with the {IMP}~\citep{paul2023unmasking} and {RareGems}~\citep{sreenivasan2022rare} techniques.
On the \textit{quantization} front, we compare with {XNOR-Net}~\citep{rastegari2016xnor}. With respect to \textit{low-rank} methods, we compare with {Spectral Initialisation}~\cite{khodak2021initialization}, {Pufferfish}~\citep{wang2021pufferfish} and {Cuttlefish}~\citep{wang2023cuttlefish}. 

\begin{figure*}[t]
  \centering
  \hfill
  \begin{subfigure}[t]{0.32\linewidth}
    \includegraphics[width=\textwidth]{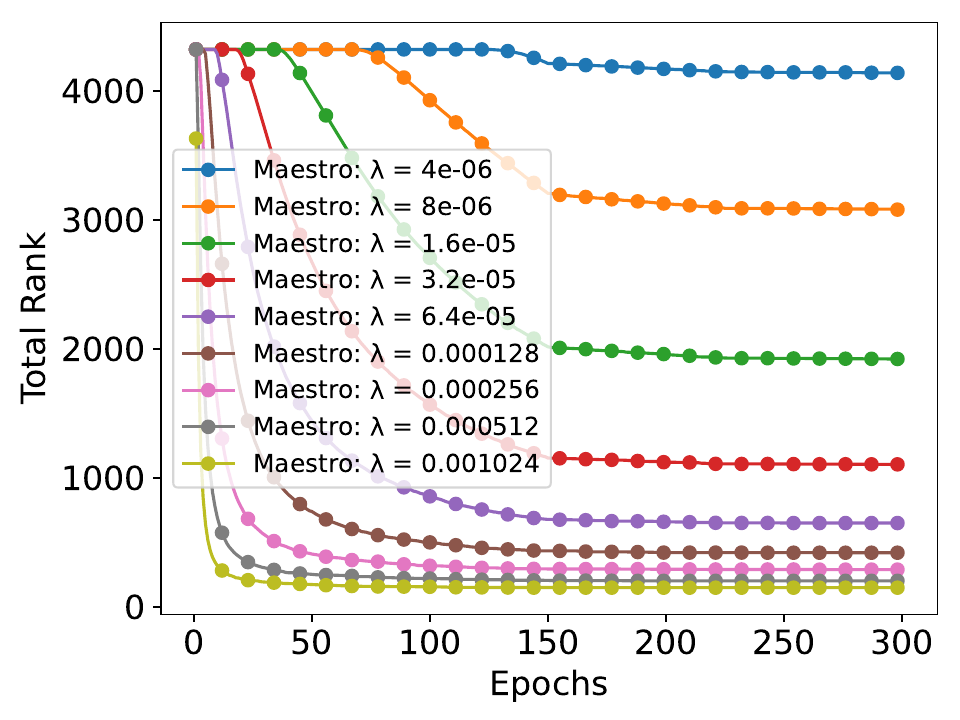}
    \captionsetup{font=small,labelfont=bf}
    \caption{Total rank ($\sum_{i=1}^d r_i$).}
    \label{fig:training_dynamics_a}
  \end{subfigure}
  \hfill
  \begin{subfigure}[t]{0.32\textwidth}
    \includegraphics[width=\textwidth]{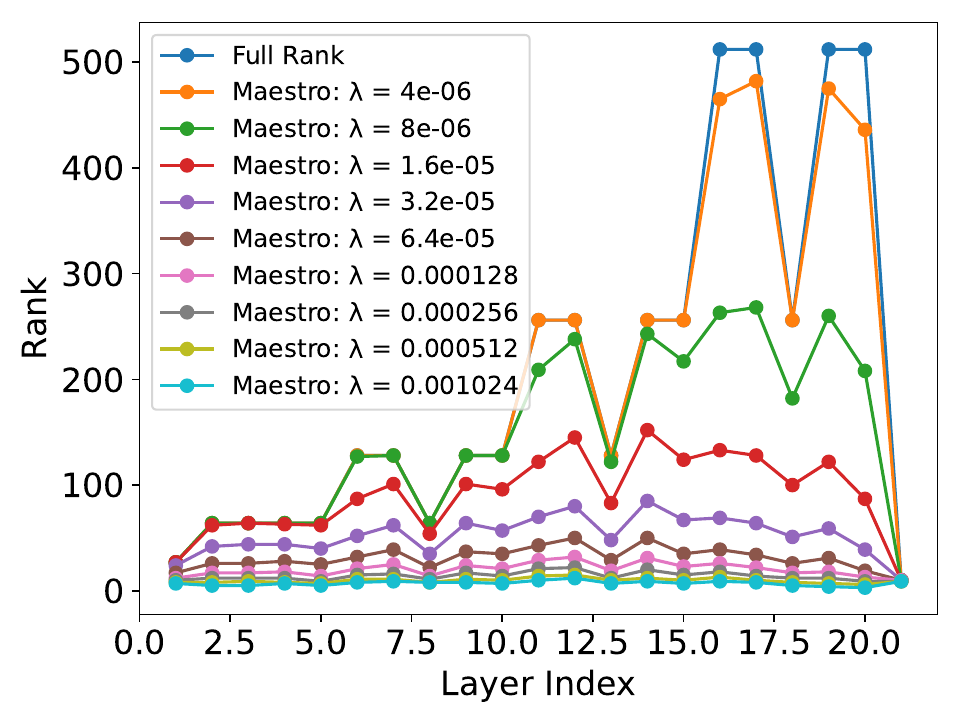}
    \vspace{-0.6cm}
    \captionsetup{font=small,labelfont=bf}
    \caption{Ranks $r_i$'s after training.}
    \label{fig:training_dynamics_b}
  \end{subfigure}
  \hfill
  \begin{subfigure}[t]{0.32\textwidth}
    \includegraphics[width=\textwidth]{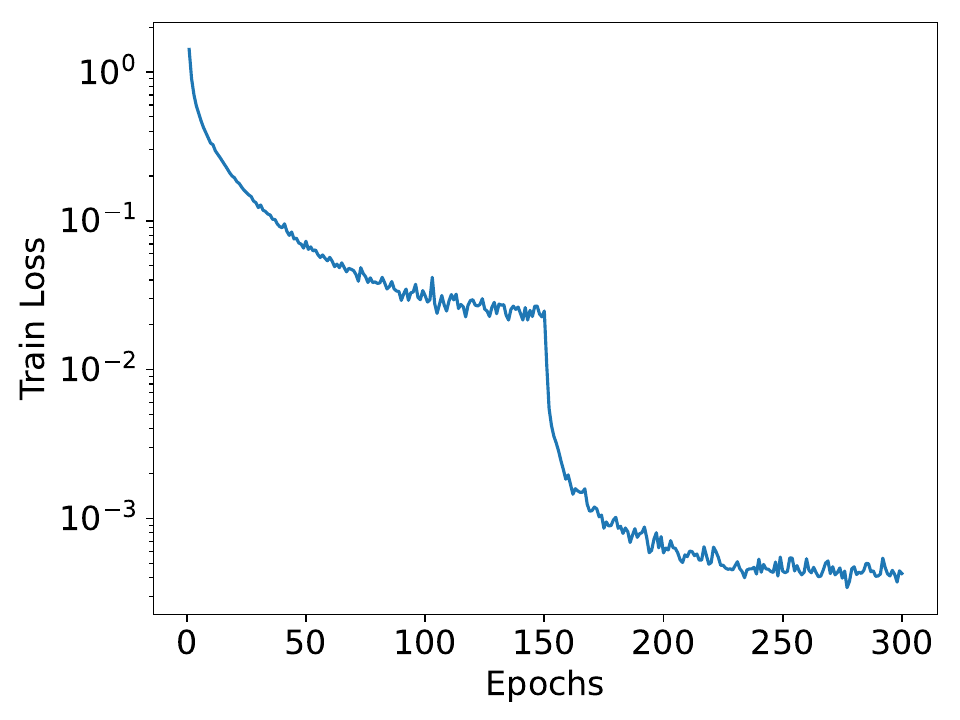}
    \captionsetup{font=small,labelfont=bf}
    \caption{Convergence for $\lambda_{gl} = 0$.}
    \label{fig:training_dynamics_c}
  \end{subfigure}
  \captionsetup{font=small,labelfont=bf}
  \caption{Training dynamics of \tool for ResNet18 on CIFAR10. \camready{Results for other datasets can be found in Appendix~\ref{app:tool_behaviour}.}}
  \vspace{-0.2cm}
  \label{fig:training_dynamics}
\end{figure*}

\vspace{-0.1cm}
\subsection{Performance Comparison}
\label{sec:performance_comparison}

We start off by comparing \tool with the \red{mentioned baselines from the literature across the datasets and models of Tab.~\ref{tab:datasets-models}}\footnote{{The operating points we select for \tool are the closest lower to the respective baseline in terms of footprint. Where the result is not present in the Fig.~\ref{fig:cifar10_baselines}, we provide the $\lambda_{gp}$ value so that it can be referenced from the Appendix, Tab.~\ref{tab:resnet_gp_lambda}, \ref{tab:vgg_gp_lambda}.}}. Results are depicted in Fig.~\ref{fig:cifar10_baselines} and Tab.~\ref{tab:transformer_baselines}, while additional performance points of \tool for different model footprints are presented in the Appendix {\ref{app:tool_behaviour} and \ref{app:tradeoff}.}

\noindent\textbf{Comparisons with low-rank methods.}
The low-rank methods we are comparing against are Pufferfish~\citep{wang2021pufferfish} and Cuttlefish~\citep{wang2023cuttlefish}. These methods try to reduce training and inference runtime while preserving model accuracy by leveraging low-rank approximations. 
For ResNet-18, we achieve 94.19$\pm$0.07\% for 4.08M parameters and 93.97$\pm$0.25\% for 2.19M parameters compared to the 94.17\% of Pufferfish at 3.3M parameters. For VGG-19, we achieve +0.41pp (percentage points) higher accuracy compared to Pufferfish and -0.29pp to Cuttlefish at 44.8\% and 93.2\% of the sizes, respectively. Finally, comparing with the spectral initialization~\citep{khodak2021initialization} for VGG-19, we achieve +5.26pp higher accuracy for 87.5\% of parameter size. Detailed results are shown in Tab.~\ref{tab:cifar10_baselines}. This performance benefits also apply in the case of Transformers (Tab.~\ref{tab:transformer_baselines}), where \tool performs 6\% better in terms of perplexity at 25\% of the cost (MACs) and 51.7\% of the size (parameters) compared to Pufferfish. \red{It is worth noting that both Pufferfish and Cuttlefish, by default, do not decompose all layers and have warm-up full-training rounds, both of which cause training \camready{and hyperparameter optimization} overheads. \camready{In contrast, our technique only introduces two hyperarameters, namely $\lambda_{gl}$ and $\epsilon_{ps}$, which govern the whole training process.}} \camready{We have scaled up our experiments to ImageNet-1k levels (Tab.~\ref{tab:imagenet_baselines}) and for the same setup of full decomposition, we achieve slightly higher accuracy (+0.51pp) at 97.8\% of the size of Pufferfish. For partial decomposition, \tool performs on par with Pufferfish and Cuttlefish at a lower training and inference cost.}

\begin{table}
\centering
\captionsetup{font=small,labelfont=bf}
\vspace{-0.3cm}
\caption{\camready{Maestro vs. baselines on ImageNet-1k.}}
\vspace{-0.3cm}
\label{tab:imagenet_baselines}
\scalebox{0.8}{
    \begin{tabular}{lllll}
        \toprule
        \rowcolor{Gray} Variant         & Model       & Acc. (\%)         & Params. ($M$) & GMACs \\
        \midrule
        \multicolumn{5}{l}{\textbf{No decomposition}} \\
        Non-factorized & ResNet-50 & 75.32 & 25.26 & 4.12 \\
        \midrule
        \multicolumn{5}{l}{\textbf{Not decomposing first four blocks and last layer}} \\
        Pufferfish$^\dagger$     & ResNet-50 & 75.99 & 15.2  & 3.6  \\
        Cuttlefish$^\dagger$     & ResNet-50 & 76.00 & 14.9  & 3.6  \\ 
        \tool$^*$          & ResNet-50 & \textbf{76.04} & \textbf{14.0}  & \textbf{3.4}  \\
        \midrule
        \multicolumn{5}{l}{\textbf{Decomposing all layers}} \\
        Pufferfish$^\dagger$     & ResNet-50 & 71.03 & 9.4   & 2.1  \\
        \tool$^*$          & ResNet-50 & \textbf{71.54} & \textbf{9.2}   & \textbf{2.0}  \\
        \bottomrule
        \multicolumn{5}{p{9.8cm}}{$^*$$\lambda_{gl}$ chosen such that the final number of parameters and accuracy is similar to the baseline models; $^\dagger$ without label smoothing (same as our setup for Maestro)}. \\
    \end{tabular}}
    \vspace{-0.5cm}
\end{table}

\noindent\textbf{Comparisons with pruning methods.}
The next family of baselines is related to the LTH~\citep{frankle2018the}. Specifically, we compare against IMP~\citep{paul2023unmasking} and witness that \tool can achieve +1.25pp ($\lambda_{gp} = 128e^{-6}$) and +0.24pp ($\lambda_{gp} = 32e^{-6}$) higher accuracy for ResNet-18 and VGG-19 respectively. \red{The detailed results are shown in Tab.~\ref{tab:cifar10_baselines} of the Appendix.}
Although we cannot scale to the size that RareGems~\citep{sreenivasan2022rare} for ResNet-18, the sparsity that they achieve is unstructured, which most modern hardware cannot take advantage of. In contrast, our technique performs ordered structured sparsity compatibly with most computation targets. On the other hand, for VGG-19, we achieve +6.82pp higher accuracy at 43.6\% of the footprint.

\noindent\textbf{Comparisons with quantized models.}
We also compare against XNOR-Net~\citep{rastegari2016xnor}, which binarizes the network to achieve efficient inference. Training continues to happen in full precision, and inference performance is dependent on the operation implementation of the target hardware. Nonetheless, assuming a compression rate of 3.125\%, for the same model size, we achieve +1.08pp ($\lambda_{gp} = 512e^{-6}$) and +2.18pp ($\lambda_{gp} = 256e^{-6}$) higher accuracy on ResNet-18 and VGG-19.

\begin{table}[t]
    \centering
    \captionsetup{font=small,labelfont=bf}
    \vspace{-0.3cm}
    \caption{Ablation study for ResNet18 on CIFAR10}
    \vspace{-0.3cm}
    \label{tab:ablation}
    \setlength{\tabcolsep}{2pt}
    \scalebox{0.85}{
        \begin{tabular}{llccc}
            \toprule
            \rowcolor{Gray} Variant                   & Acc. (\%)         & Rel. GMACs (Train.) & Params. ($M$) \\ \midrule
            \textbf{\tool}            & \textbf{94.19{\tiny$\pm$0.39}}        &  \textbf{1.00$\times$}      &   \textbf{4.08{\tiny$\pm$0.020}}           \\
            \textbf{w/out GL}         & 94.04{\tiny$\pm$0.10}        &  1.33$\times$      &   11.2{\tiny$\pm$0.000}           \\
            \textbf{w/out PS}         & 94.12{\tiny$\pm$0.36}        &  1.33$\times$      &   4.09{\tiny$\pm$0.027}           \\
            \textbf{w/ full-training} & 94.05{\tiny$\pm$0.32}        &  1.97$\times$      &   4.09{\tiny$\pm$0.032}           \\
            \bottomrule
        \end{tabular}
    }
    \vspace{-0.4cm}
\end{table}

\begin{figure*}[t]
  \vspace{-0.1cm}
  \centering
  \begin{subfigure}[t]{0.3\textwidth}
  \centering
    \includegraphics[width=\textwidth]{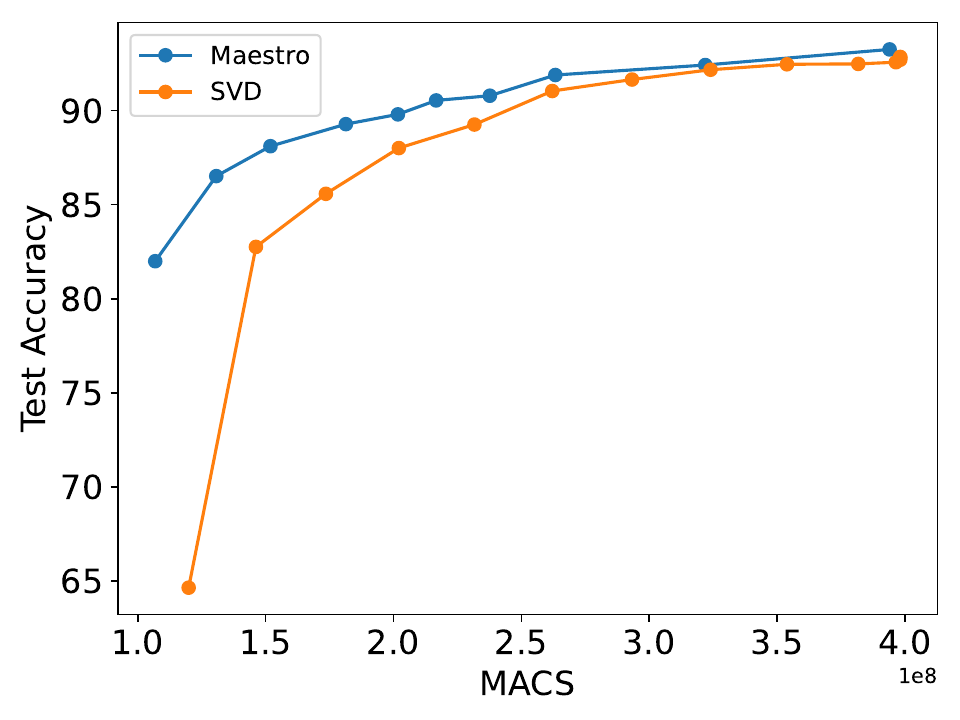}
    \vspace{-0.6cm}
    \captionsetup{font=small,labelfont=bf}
    \caption{\tool vs. SVD.}
    \label{fig:acc_latency_trade_off_a}
  \end{subfigure}
  \hfill
  \begin{subfigure}[t]{0.3\textwidth}
    \includegraphics[width=\textwidth]
    {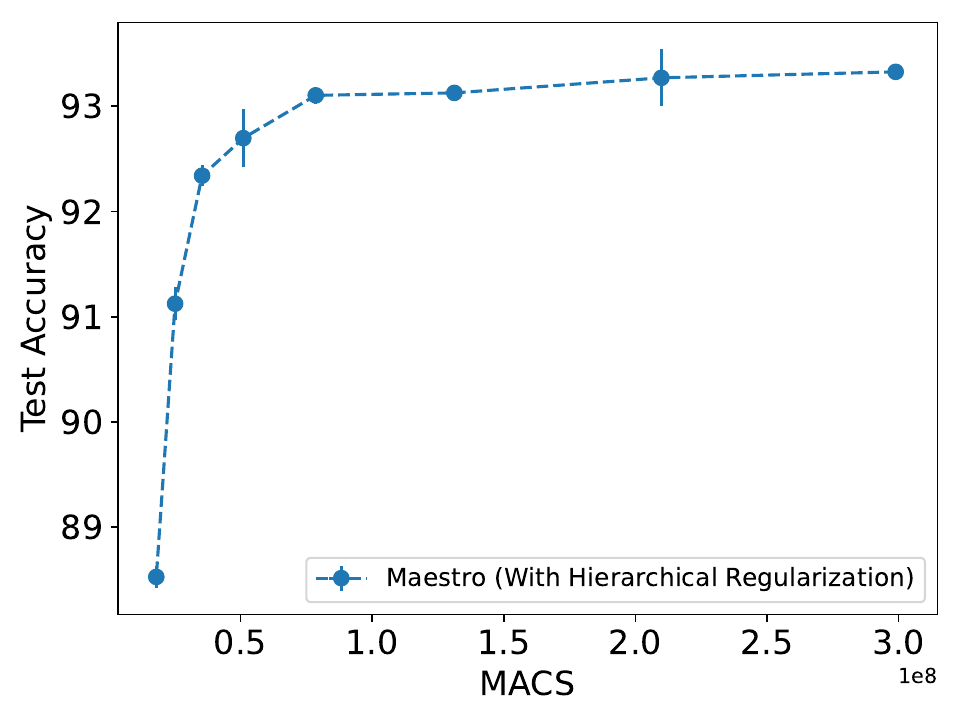}
    \vspace{-0.6cm}
    \captionsetup{font=small,labelfont=bf}
    \caption{Varying HGL.}
    \label{fig:acc_latency_trade_off_b}
  \end{subfigure}
  \hfill
  \begin{subfigure}[t]{0.3\textwidth}
  \centering
    \includegraphics[width=\textwidth]{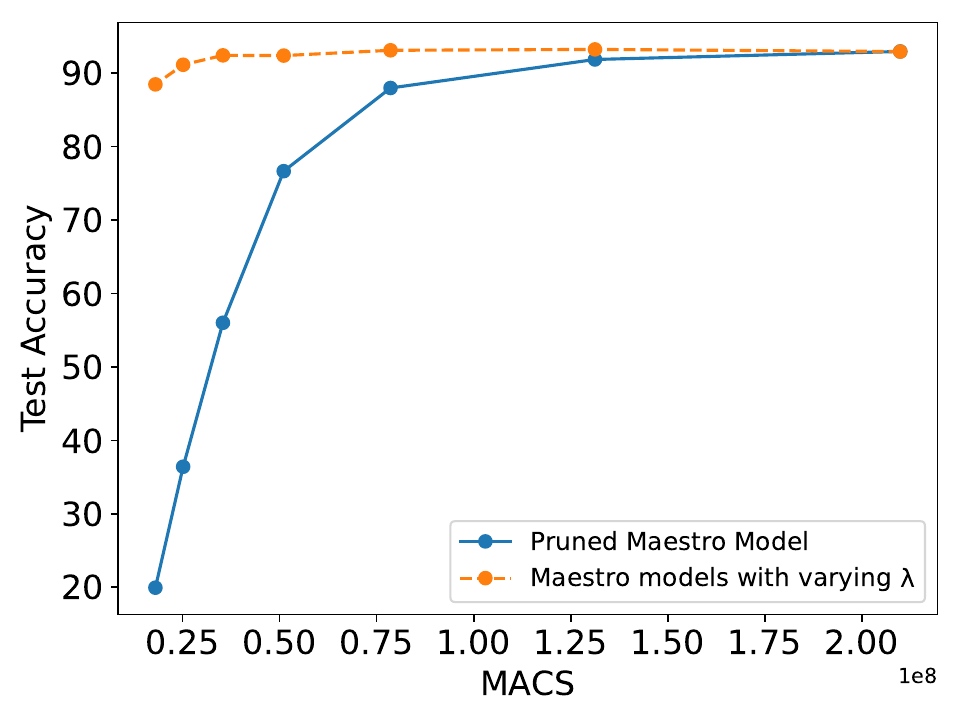}
    \vspace{-0.6cm}
    \captionsetup{font=small,labelfont=bf}
    \caption{Nested \tool.}
    \label{fig:acc_latency_trade_off_c}
  \end{subfigure}
  \captionsetup{font=small,labelfont=bf}
  \vspace{-0.1cm}
  \caption{\tool accuracy-latency trade-off under different settings for VGG19 on CIFAR10. \camready{Additional results in Appendix~\ref{app:tradeoff}.}}
  \vspace{-0.5cm}
  \label{fig:acc_latency_trade_off}
\end{figure*}

\vspace{-0.4cm}
\subsection{Training Behaviour of \tool}
\vspace{-0.2cm}
\label{sec:tool_behaviour}

Having shown the relative performance of our framework to selected baselines, we now move to investigate how our method behaves with respect to its convergence and low-rank approximations. 

\noindent\textbf{Model and rank convergence.} In Fig.~\ref{fig:training_dynamics}, we present the training dynamics for \tool. Fig.~\ref{fig:training_dynamics_a} illustrates the evolution of total rank throughout the training steps. We observe that the ranks are pruned incrementally. This aligns with the observations made during Pufferfish~\cite{wang2021pufferfish} training, where the authors suggest warm-start training with full precision to enhance the final model performance. In our situation, we do not need to integrate this heuristic because \tool automatically prunes rank. Fig.~\ref{fig:training_dynamics_b} reveals the ranks across layers after training. We notice an intriguing phenomenon: the ranks are nested for increasing $\lambda_{gl}$. This could imply apart from a natural order of ranks within each layer, a global order. We briefly examine this captivating occurrence in the following section, and we plan to investigate it more thoroughly in future work, as we believe this might contribute to a superior rank selection and sampling process. Lastly, Fig.~\ref{fig:training_dynamics_c} depicts the progression of training loss. We find that our hypothesis that sampling does not adversely impact training is also supported empirically.

\vspace{-2mm}
\subsection{Ablation Study}
\label{sec:ablation}
\vspace{-1mm}
In this section, we examine the impact of each component on the performance of \tool. Specifically, we run variants of our method \textit{i)}~without the \textit{hierarchical group lasso regularization (HGL)}, \textit{ii)}~without progressive\textit{ shrinking (PS)}. Additionally, we integrate \textit{iii)}~an \textit{extra full low-rank pass} ($b=r_i$) into the training at each step to assess whether extra sampling would be beneficial. The results are displayed in Tab.~\ref{tab:ablation}. As anticipated, our findings confirm that neither the inclusion of hierarchical group lasso with a tuned $\lambda_{gl}$ nor progressive shrinking impair the final performance, but they do significantly enhance the efficiency of \tool. Moreover, sampling more ranks at each training step does not improve the final performance, and, in fact, it hampers training efficiency, making it approximately twice as computationally demanding.

\subsection{Accuracy-Latency Trade-Off at Training and Deployment Time}
\label{sec:tradeoff}
\vspace{-1mm}
In Fig.~\ref{fig:acc_latency_trade_off}, we illustrate various approaches to balance latency (proxied through MACs operations) and accuracy in model training and deployment. Fig.~\ref{fig:acc_latency_trade_off_a} demonstrates how \tool ($\lambda_{gl}=0$) can be pruned effectively for deployment using the greedy search method discussed in Section~\ref{sec:deploy}. We contrast this with the greedy pruning of a non-factorized model that has been factorized using SVD. We reveal that this straightforward baseline does not measure up to the learned decomposition of \tool and results in a significant performance decrease.
Next, Fig.~\ref{fig:acc_latency_trade_off_b} portrays the final accuracy and the number of model parameters for varying hierarchical group lasso penalties. This leads to the optimal latency-accuracy balance for both training and inference. However, it is crucial to point out that each model was trained individually, while greedy pruning only necessitates a single training cycle.
Lastly, we delve into the observation of nested ranks across increasing $\lambda_{gl}$. Fig.~\ref{fig:acc_latency_trade_off_c} displays the performance of \tool ($\lambda_{gl} = 0$) across different ranks selected by smaller models \tool ($\lambda_{gl} > 0$). Intriguingly, we observe that \tool ($\lambda_{gl} = 0$) performs very well—for instance, we can decrease its operations in half (and parameters by 10$\times$) and still maintain an accuracy of $87.7\%$ without fine-tuning, just by reusing rank structure from independent runs. As aforementioned, we intend to further explore this in the future.

\vspace{-0.3cm}
\section{Conclusion and Future Work}
In this work, we have presented \tool, a method for trainable low-rank approximation of DNNs that leverages progressive shrinking by applying a generalized variant of Ordered Dropout to the factorized weights. We have shown the theoretical guarantees of our work in the case of linear models and empirically demonstrated its performance across different types of models, datasets, and modalities. Our evaluation has demonstrated that \tool outperforms competitive compression methods at a lower cost. In the future, we plan to expand our technique to encompass more advanced sampling techniques and apply it to different distributed learning scenarios, such as Federated Learning, where data are natively non-independent or identically distributed (non-IID).

\vspace{-0.3cm}
\section*{Impact Statement}
\vspace{-0.1cm}
The goal of our work is to make the training and deployment of DNNs more efficient, affecting the total computation, memory and bandwidth of systems, as well as the energy they require to run the respective tasks. DNN model training requires significant amounts of energy, whether in a data center or at the edge~\cite{wu2022sustainable,patterson2022carbon}. However, such techniques should not be used in lieu of making data centers less green, but as a complementary measure to further reduce the carbon footprint of \mbox{Deep Learning.}

Additionally, as our technique involves a training-aware methodology for progressively selecting ranks, it depends on the quality of data used in training. Deploying the model in the wild for various downstream tasks may result in behavior different from the intended outcomes. Therefore, it should be thoroughly tested before deployment to ensure it adheres to the required Service Level Objectives (SLOs), especially in performance-critical use cases, such as self-driving vehicles or UAV navigation.

{
\small
\bibliography{literature}
\bibliographystyle{icml2024}
}

\newpage
\appendix
\onecolumn

\part*{Appendix}
{\hypersetup{linkcolor=black}
\parskip=0em
\renewcommand{\contentsname}{Contents of the Appendix}
\tableofcontents
\addtocontents{toc}{\protect\setcounter{tocdepth}{3}}
}

\newcommand{\blocka}[2]{\multirow{3}{*}{\(\left[\begin{array}{c}\text{3$\times$3, #1}\\[-.1em] \text{3$\times$3, #1} \end{array}\right]\)$\times$#2}
}
\newcommand{\blockb}[3]{\multirow{3}{*}{\(\left[\begin{array}{c}\text{1$\times$1, #2}\\[-.1em] \text{$3\times$3, #2}\\[-.1em] \text{1$\times$1, #1}\end{array}\right]\)$\times$#3}
}

\section{Limitations}

In this work, we have proposed a method for trainable low-rank approximation of DNNs that provides performance benefits for both training and inference times. While we suggest that this could have repercussions on the energy consumption of these tasks, we have not yet evaluated this hypothesis experimentally across different devices, be they data center-grade or at the edge.

Additionally, we have applied our technique to CNN and Transformer models spanning across vision and NLP tasks. While we anticipate generalization to any type of network, it remains to be seen whether our techniques can also be applied to alternative types of layers, such as recurrent ones, and the benefits they may bring.

{Although we have provided a thorough investigation of the behaviour of our proposed system, the only way we can control the end footprint of the model during training is via the $\lambda_{gl}$ and $\epsilon_{ps}$ hyperparameters. However, there is no guarantee about the final footprint of the model. If we are willing to sacrifise accuracy, then the technique illustrated in Sec.~\ref{sec:deploy} and evaluated in Sec.~\ref{sec:tradeoff} is a start. More robust ways of globally ranking per-layer importances are left as future work.}

Lastly, our sampling method during training is uniform up to the maximum rank during progressive shrinking. Although this method has proven effective, alternative sampling methods could potentially accelerate rank exploration, thereby hastening the shrinking and convergence of the network during training.

{\section{Extended Background}}

\noindent
{\textbf{Ordered Dropout.} Ordered Dropout is a technique of importance-based, nested and ordered pruning that works along the indices of a layer's parameters (neurons, filters, etc.) Introduced by~\citep{horvath2021fjord}, the authors describe a training technique where a layer's width is discretised in $|P|$ values, where $P=\{s_1, s_2, …, s_{|P|}\}$, and at each training step, they sample $p \sim U_P$ to get a specific subnetwork, extracted by selecting the first $\ceil[\big]{p*K_l-1}$ neurons per layer and dropping the rest. In contrast to our work, sampling is happening directly on model parameters (rather than ranks) and is uniform across layers (i.e. a single p-value is set). Nested-ness refers to the fact that larger p-value models include the parameters of lower p-values and importance-based pruning means that via stochastic sampling, the right-most (in terms of index) parameters train on progressively less data due to the probability of sampling and nestedness (i.e. all data pass from the parameters of minimal subnetwork, less pass the higher the p-value).}

\section{Theoretical Properties of Low-Rank Layers}
\label{app:theory}
In this section, we show that for the case of linear mappings, i.e., the problem formulation discussed in \eqref{eq:low_rank_optim}, \tool acts as PCA applied to the original dataset $\cX$ projected onto the space weighted by the corresponding singular values.  Before proceeding with the theorem, we first recall the assumptions and notations introduced in the main paper. 

We denote $C_{:b}$ as the first $b$ columns of matrix $C$, $C_{:a, :b}$ denotes the first $a$ rows, and $b$ columns of a matrix $C$, $a+1:$ denotes the all the columns/rows from index $a+1$, $:$ denotes the all the columns/rows, and for vectors, we use a single subscript. As discussed in the main paper, we reformulate the original least squares problems to the following decomposition problem 
\begin{align}
    \label{eq:low_rank_optim_app}
    \begin{split}
        &\min_{U \in \R^{m \times r}, V \in \R^{n \times r}} \EE{x, y \sim \cX}{\EE{b \sim \cD}{\norm*{U_{:b}V_{:b}^\top x - y}^2}},
    \end{split}
\end{align}
where $\cD$ is a distribution that samples $b \in \cbr{1, 2, \hdots, r}$ with probability $p_b > 0$ and we assume that $y$ is linked with $x$ through linear map $A$, i.e., $y = Ax$. 
\begin{theorem}
\label{thm:linear_tool_is_svd_app}
    Let $A = \tilde{U} \tilde{\Sigma} \tilde{V}^\top$ be a SVD decomposition of $A$. Then, the minimization problem \eqref{eq:low_rank_optim_app} is equivalent to PCA applied to the transformed dataset $x \rightarrow \tilde{\Sigma} \tilde{V}^\top x$, $x \sim \cX$ projected on the column space of $\tilde{U}$. Concretely, we can first solve  
    \begin{align}
    \label{eq:pca_final_full}
        \min_{U \in \R^{m \times r}, V \in \R^{n \times r}} \EE{z \sim \cX}{\EE{b \sim \cD}{\norm*{\rbr*{U_{:b}V_{:b}^\top - I}\tilde{\Sigma}\tilde{V}^\top x}^2}},
    \end{align}
    and then we can obtain the solutions of \eqref{eq:low_rank_optim_app}  using $U^\star = \tilde{U}^\top \Bar{U} , V^\star = \tilde{V}^\top\Bar{V}$, where $\Bar{U}, \Bar{V}$ belong to the set of optimal solutions of problem \eqref{eq:pca_final_full}.zx
    
    In the particular case, where $\cX$ is a uniform distribution on the unit ball,  \eqref{eq:low_rank_optim_app} recovers the best rank approximation of $A$ across all ranks, i.e., up to the scale of $U$ and $V$ recovers truncated SVD. In the case, $A$ is identity, \eqref{eq:low_rank_optim_app} leads to standard PCA decomposition. 
\end{theorem}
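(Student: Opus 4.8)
The plan is to reduce the stochastic objective to a deterministic, covariance-weighted matrix approximation problem, solve it via Eckart--Young, and then reinterpret the resulting singular structure as the PCA of the transformed data. First I would write $\Sigma_x := \EE{x\sim\cX}{xx^\top}$ and use $y = Ax$ to express each summand of \eqref{eq:low_rank_optim_app} as a weighted Frobenius norm,
\begin{equation*}
\EE{x\sim\cX}{\norm*{(U_{:b}V_{:b}^\top - A)x}^2} = \norm*{(U_{:b}V_{:b}^\top - A)\Sigma_x^{1/2}}_F^2 .
\end{equation*}
The key reformulation is to absorb $\Sigma_x^{1/2}$ into the right factor through the change of variables $\widehat V := \Sigma_x^{1/2} V$, so that $U_{:b}V_{:b}^\top\Sigma_x^{1/2} = U_{:b}\widehat V_{:b}^\top$ is an arbitrary matrix of rank at most $b$, and each summand becomes $\norm*{U_{:b}\widehat V_{:b}^\top - \widetilde A}_F^2$ with $\widetilde A := A\Sigma_x^{1/2}$.

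Next I would argue that the joint minimization over the \emph{nested} factors $(U,V)$ decouples across $b$. For each fixed $b$, Eckart--Young states that the best rank-$b$ approximation of $\widetilde A$ in Frobenius norm is its truncated SVD. Because truncated SVDs at different ranks are nested --- the rank-$b$ truncation is obtained from the rank-$(b+1)$ truncation by deleting one singular triple --- a single set of singular vectors of $\widetilde A$ simultaneously minimizes every summand. Hence the minimizer of the averaged objective $\EE{b\sim\cD}{\cdot}$ (with all $p_b > 0$) is governed, up to the usual scaling split between $U$ and $V$, by the SVD of $\widetilde A = A\Sigma_x^{1/2}$, and the induced rank-$b$ operators are automatically ordered.

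It then remains to identify this SVD with the PCA of $z := \widetilde\Sigma\widetilde V^\top x$. Writing $A = \widetilde U\widetilde\Sigma\widetilde V^\top$ and using orthogonality of $\widetilde U$, I would compute
\begin{equation*}
\widetilde A\,\widetilde A^\top = \widetilde U\,\widetilde\Sigma\widetilde V^\top\Sigma_x\widetilde V\widetilde\Sigma\,\widetilde U^\top = \widetilde U\,\Sigma_z\,\widetilde U^\top, \qquad \Sigma_z := \EE{x\sim\cX}{zz^\top},
\end{equation*}
so that the left singular vectors of $\widetilde A$ are exactly $\widetilde U$ applied to the eigenvectors of the covariance $\Sigma_z$, i.e. the principal directions of $z$ seen through the column space of $\widetilde U$. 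This is precisely \eqref{eq:pca_final_full} together with the stated recovery map. The two special cases fall out immediately: if $\cX$ is uniform on the unit ball then $\Sigma_x \propto I$, so $\widetilde A \propto A$ and the SVD of $\widetilde A$ is the truncated SVD of $A$ up to scale; and if $A = I$ then $z = x$ and the problem is verbatim the PCA reconstruction of $x$.

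The main obstacle I expect is the degenerate regime where $\Sigma_x$ or $A$ is rank-deficient. In that case the substitution $\widehat V = \Sigma_x^{1/2}V$ is no longer a bijection, since $\Sigma_x^{1/2}$ has a nontrivial kernel, and $\widetilde\Sigma$ is singular. I would handle this by restricting the entire argument to $\range{\Sigma_x}$ --- directions carrying no data contribute nothing to the objective and may be fixed arbitrarily --- and by observing that zero singular values of $A$ produce a zero target, so the optimal reconstruction assigns no rank to them; this is the ``filtering out the null space and the directions with no data'' behaviour anticipated in the informal statement. A secondary, routine point is characterizing the full solution set: the minimizer is unique only up to the scaling ambiguity between $U$ and $V$ and up to orthogonal rotations within eigenspaces of repeated singular values, which I would state explicitly to match the uniqueness claim for distinct singular values.
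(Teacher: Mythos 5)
Your proposal is correct, but it takes a genuinely different route from the paper's proof. The paper proceeds by a chain of reparametrizations of the objective itself: it uses orthogonal invariance of the norm under $\tilde{U}$, injects the identity $\tilde{V}\tilde{V}^\top$, argues via the nested structure that the optimal $U$ and $V$ must vanish on the rows beyond $k=\rank(A)$, and finally rescales by $\tilde{\Sigma}_{:k,:k}^{-1}$ to land exactly on the canonical nested-autoencoder form $\norm*{(U_{:b}V_{:b}^\top - I)\,\tilde{\Sigma}_{:k,:k}z_{:k}}^2$, at which point it invokes the known result of Rippel et al.\ that this objective is solved by PCA. You instead collapse the expectation over $x$ into a weighted Frobenius norm, absorb $\Sigma_x^{1/2}$ into the right factor, and solve the resulting deterministic nested low-rank approximation of $\widetilde{A}=A\Sigma_x^{1/2}$ directly via Eckart--Young together with the nestedness of truncated SVDs (a single nested factor pair attains every summand's minimum simultaneously, and since every $p_b>0$ any minimizer must do so); the identity $\widetilde{A}\widetilde{A}^\top=\tilde{U}\,\Sigma_z\,\tilde{U}^\top$ then converts this back into the PCA statement. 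Your route is more self-contained and makes the solution fully explicit --- the optimum \emph{is} the SVD of $A\Sigma_x^{1/2}$ --- from which both special cases drop out in one line; the paper's route produces the precise intermediate problem \eqref{eq:pca_final_full} and the recovery map verbatim, and explicitly treats the rank deficiency of $A$ (the vanishing of $u_{i,k+1:}$ and $v_{i,k+1:}$), whereas it never discusses degeneracy of the data covariance. Your closing paragraph correctly identifies and repairs the one real gap in your own argument (the substitution $\widehat{V}=\Sigma_x^{1/2}V$ failing to be surjective when $\Sigma_x$ is singular): the constrained and unconstrained minima coincide because the row space of the truncated SVD of $\widetilde{A}$ already lies in $\range{\Sigma_x^{1/2}}$. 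With that observation spelled out, your proof is complete and, if anything, slightly more careful than the paper's on the degenerate-data side.
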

\begin{proof}
    From the assumptions that $y = Ax$ and $A = \tilde{U} \tilde{\Sigma} \tilde{V}^\top$, we can rewrite \eqref{eq:low_rank_optim_app} as
    \begin{align*}
        \min_{U \in \R^{m \times r}, V \in \R^{n \times r}} \EE{x \sim \cX}{\EE{b \sim \cD}{\norm*{\rbr*{U_{:b}V_{:b}^\top -  \tilde{U} \tilde{\Sigma} \tilde{V}^\top}x}^2}}.
    \end{align*}
    Since $\tilde{U}$ is orthogonal, we have $\norm{z} = \norm{\tilde{U}^\top z}$. Therefore, the above problem is equivalent to 
    \begin{align*}
        \min_{U \in \R^{m \times r}, V \in \R^{n \times r}} \EE{x \sim \cX}{\EE{b \sim \cD}{\norm*{\rbr*{\tilde{U}^\top U_{:b}V_{:b}^\top - \tilde{\Sigma} \tilde{V}^\top}x}^2}},
    \end{align*}
    which is also equivalent to 
    \begin{align*}
        \min_{U \in \R^{m \times r}, V \in \R^{n \times r}} \EE{x\sim \cX}{\EE{b \sim \cD}{\norm*{\rbr*{U_{:b}V_{:b}^\top - \tilde{\Sigma} \tilde{V}^\top}x}^2}}
    \end{align*}
    after reparametrization. The next step involves injecting identity in the form $\tilde{V} \tilde{V}^\top$ as that leads to the equivalent reformulation 
    \begin{align*}
        \min_{U \in \R^{m \times r}, V \in \R^{n \times r}} \EE{x \sim \cX}{\EE{b \sim \cD}{\norm*{\rbr*{U_{:b}V_{:b}^\top \tilde{V} - \tilde{\Sigma}}\tilde{V}^\top x}^2}}.
    \end{align*}
    As for the previous case, we can reparametrise the problem to obtain 
    \begin{align*}
        \min_{U \in \R^{m \times r}, V \in \R^{n \times r}} \EE{x \sim \cX}{\EE{b \sim \cD}{\norm*{\rbr*{U_{:b}V_{:b}^\top - \tilde{\Sigma}}\tilde{V}^\top x}^2}}.
    \end{align*}
    Let $k = \rank(\tilde{\Sigma}) = \rank(A) \leq r$ and $z = \tilde{V}^\top x$. Furthermore, let $g = \tilde{\Sigma}z$ for any $z \in \R^n$, then $g_{k+1:} = \Vec{0}$. This, combined with the nested structure of the optimization problem, implies that the optimal solution for $U$ has to be of the form $u_{i, k+1:} = \Vec{0}$ for all interesting (non-zero mapping) directions, i.e., there exists $x \in \cX$ such that $v_i^\top \tilde{V}^\top x \neq 0$. These are the only interesting solutions since the case where for all $x \in \cX:$ $v_i^\top \tilde{V}^\top x = 0$ yields zero mapping on $\cX$, which is not of interest and could be dropped, e.g., using group lasso penalty discussed in the main part. Therefore, to solve the original problem, we could first solve the following problem
    \begin{align*}
        \min_{U \in \R^{k \times r}, V \in \R^{n \times r}} \EE{z \sim \cX}{\EE{b \sim \cD}{\norm*{\rbr*{U_{:k, :b}V_{:b}^\top - \tilde{\Sigma}_{:k, :}}z}^2}}
    \end{align*}
    and then reconstruct the corresponding solution of the original problem by appending zeros to the resulting matrix $U$. By a similar argument, we can argue that for all non-zero mapping directions, it has to be the case that $v_{i, k+1:} = \Vec{0}$. Therefore, solving the original minimization reduces to 
    \begin{align*}
        \min_{U \in \R^{k \times r}, V \in \R^{k \times r}} \EE{z \sim \cX}{\EE{b \sim \cD}{\norm*{\rbr*{U_{:b}V_{:b}^\top - \tilde{\Sigma}_{:k, :k}}z_{:k}}^2}}.
    \end{align*}
    This can be further simplified using reparametrization $V^\top \to V^\top \tilde{\Sigma}_{:k, :k}^{-1}$, which leads to
    \begin{align}
    \label{eq:pca_final}
        \min_{U \in \R^{k \times r}, V \in \R^{k \times r}} \EE{z \sim \cX}{\EE{b \sim \cD}{\norm*{\rbr*{U_{:b}V_{:b}^\top - I_k}\tilde{\Sigma}_{:k, :k}z_{:k}}^2}},
    \end{align}
    where $I_k$ is $k \times k$ identity. If $\cX$ is centred around zero, then $\tilde{\Sigma}_{:k, :k}z_{:k}$ is also centred around zero, and the above problem is up to scaling equivalent to PCA of $\tilde{\Sigma}_{:k, :k}z_{:k}$ as shown by Rippel et al.~\citep{rippel2014learning}. Since $\tilde{\Sigma}$ is a diagonal matrix with only $k \times k$ non-zero upper left sub-matrix, therefore, PCA on $\tilde{\Sigma}_{:k, :k}z_{:k}$ is equivalent to PCA on $\tilde{\Sigma}z$ by appending zeros to the obtained principal component vectors. Thus, we can write an equivalent formulation
    \begin{align*}
        \min_{U \in \R^{m \times r}, V \in \R^{n \times r}} \EE{z \sim \cX}{\EE{b \sim \cD}{\norm*{\rbr*{U_{:b}V_{:b}^\top - I}\tilde{\Sigma}\tilde{V}^\top x}^2}}.
    \end{align*}
    Furthermore, let $\Bar{U} , \Bar{V}$ belong to the set of optimal solutions of problem \eqref{eq:pca_final_full}. Then $U^\star = \tilde{U}^\top \Bar{U} , V^\star = \tilde{V}^\top\Bar{V}$ belong to the set of optimal solutions of problem \eqref{eq:low_rank_optim_app}. This can be proved by reversing our construction and ignoring scaling since \eqref{eq:pca_final_full} is scaling invariant.

    For the case $\cX$ is a uniform distribution on the unit ball, we have $\tilde{\Sigma}_{:k, :k}z_{:k}$ is a $k$-dimensional ellipsoid with principal axes being standard basis vectors $\cbr*{e_i}_{i=1}^k$, where the length of the axes is given by ordered singular values, i.e., the first basis vector corresponds to the largest singular vector.  Therefore, its principal component vectors correspond to the basis vectors. Following our construction, one can see that the solution to the original problems leads to truncated SVD up to the scaling factor. 

    For the case $A$ is an identity, we have $k=r=m=m$, $\tilde{\Sigma}$ is an identity, and $\tilde{U} = \tilde{V}$. Under this setting, the principal component vectors obtained from \eqref{eq:pca_final} corresponds to principal component vectors of $\cX$ in basis given by columns of $\tilde{U}$. Similarly, as in the previous case, reversing the transformations to return back to the original problem, we conclude that the optimal solution of the original problem corresponds to principal component vectors of $\cX$ since we reverse the transformation by $\tilde{U}^\top$.
\end{proof}

\section{Experimental setup}
\label{sec:app_experimental_setup}

\subsection{Datasets}

\noindent
\textbf{MNIST.}
The MNIST dataset~\citep{lecun2010mnist} is a database of 28$\times$28 greyscale handwritten digits, with a training set of 60k examples and a test set of 10k samples.

\noindent
\textbf{CIFAR-10.}
The CIFAR10 dataset~\citep{krizhevsky2009learning} is a computer vision dataset that consists of 32$\times$32 RGB images classified into 10 labels. It is split into 50k training images and 10k test images which are balanced across labels.

\noindent
\textbf{ImageNet-1k.}
\camready{The ImageNet dataset (ILSVRC)~\citep{deng2009imagenet} is an image classification challenge. The task comprises to classify an 300$\times$300 RGB image among 1000 classes. In total there are 1.2M training samples and 50k test images.}

\noindent
\textbf{WMT16.}
The WMT dataset from statmt is machine translation dataset, spanning news commentaries and parliament proceedings, that aims to investigate the applicability of machine translation techniques when translating between language pairs. Specifically, we focus on the task of German-English language translation of image descriptions, commonly referred to as \textbf{Multi30k}~\citep{elliott-EtAl:2016:VL16}. We only utilise the text modality for the translation task. Data is taken straight from \texttt{torchtext}.

\subsection{Models}

\noindent
\textbf{LeNet.}
LeNet is a simple convolutional network, introduced by LeCun at al. for recognizing handwritten digits~\citep{lecun2010mnist}. It consists of a sequence of two convolutional layers, followed by three fully-connected layers. However, we are using a ReLU instead of the initially proposed sigmoid activation. The detailed architecture of the network is depicted in Tab.~\ref{arch:lenet}

\noindent
\textbf{ResNet.}
ResNet~\citep{he2016deep} is a deep neural network whose prominent feature is the existence of skip (or residual) connections, that is connections that perform identity mappings merged with the target layer it joins with through summation. Multiple residual blocks are stacked to form the network. The result is an easier to optimise network that offers enhanced accuracy. We use ResNet-18 in our experiments, the architecture of which is depicted in Tab.~\ref{tab:resnet18-cifar10-arch}, except for ImageNet, where we use ResNet-50.

\begin{table}[ht] %
    \centering
    \adjustbox{valign=b}{
    \begin{minipage}{0.45\textwidth}
    \captionsetup{font=small,labelfont=bf}
    \caption{Detailed architecture of the LeNet-5 architecture used in our experiments. Each convolution and linear layer is followed by a ReLU activation that is ommitted from the table. The shapes for convolution layers follows $(m, n, k, k)$.}
		 \footnotesize{
		 \resizebox{0.9\linewidth}{!}{
			\begin{tabular}{lll}
				\toprule \rowcolor{Gray} \textbf{Parameter}
				& Shape &  Layer hyper-parameter \bigstrut\\
				\midrule
				\textbf{layer1.conv1.weight} & $1 \times 6 \times 5 \times 5$ & stride:$1$;padding:$1$ \bigstrut\\
                    \textbf{pooling.max} & N/A & kernel size:$2$;stride:$1$;dilation:$1$  \bigstrut\\
				\textbf{layer2.conv2.weight} & $6 \times 16 \times 5 \times 5$ & stride:$1$;padding:$0$;dilation:$1$  \bigstrut\\
				\textbf{pooling.max} & N/A & kernel size:$2$;stride:$2$  \bigstrut\\
				\textbf{layer3.fc1.weight} & $256\times 120$ & N/A \bigstrut\\
                    \textbf{layer4.fc2.weight} & $120\times 84$ & N/A \bigstrut\\
                    \textbf{layer5.fc3.weight} & $84\times 10$ & N/A \bigstrut\\
				\bottomrule
			\end{tabular}}%
			}
	\label{arch:lenet}
    \end{minipage}}
    \adjustbox{valign=b}{
    \begin{minipage}{0.5\textwidth}
        \centering
        \captionsetup{font=small,labelfont=bf}
        \caption{The hybrid ResNet architecture for the CIFAR-10 and ImageNet datasets used in the experiments.}
        \footnotesize{
        \resizebox{\linewidth}{!}{
        \begin{tabular}{lll}
            \toprule
            \rowcolor{Gray}  \textbf{Layer Name} & \textbf{ResNet-18} & \textbf{ResNet-50} \\
            \midrule
            \textbf{conv1} & \multicolumn{1}{c}{3$\times$3, 64, stride 1, padding 1} & \multicolumn{1}{c}{7$\times$7, 64, stride 2, padding 1} \\
            \midrule
            \multirow{4}{*}{\textbf{conv2\_x}} & %
            & 3$\times$3 maxpool, stride 2 \\
            & \blocka{64}{2} & \blockb{256}{64}{3} \\
            & & \\
            & & \\
            \midrule
            \multirow{3}{*}{\textbf{conv3\_x}}  & \blocka{128}{2} & \blockb{512}{128}{4}
                                          \\
            & \\
            & \\
            \midrule
            \multirow{3}{*}{\textbf{conv4\_x}} & \blocka{256}{2} & \blockb{1024}{256}{6} \\
            & \\
            & \\
            \midrule
            \multirow{3}{*}{\textbf{conv5\_x}}  & \blocka{512}{2} & \blockb{2048}{512}{3} \\
              & \\
              & \\
            \midrule
            & \multicolumn{1}{c}{Avg Pool, 10-dim FC, SoftMax} & \multicolumn{1}{c}{Avg Pool, 20-dim FC, SoftMax} \\
            \bottomrule
        \end{tabular}
        }}
    \label{tab:resnet18-cifar10-arch}
    \end{minipage}}
\end{table}

\noindent
\textbf{VGG.}
VGG~\citep{Simonyan15} is a also a convolutional network that leverages smaller 3$\times$3 convolutions that enables deeper architecture than before. For our experiments we are using VGG-19, the architecture of which is depicted in Tab.~\ref{arch:vgg}.

\begin{table}[ht]
    \centering
    \captionsetup{font=small,labelfont=bf}
    \caption{Detailed architecture of the VGG-19 architecture used in our experiments. There is a BatchNorm layer followed by a ReLU activation (omitted in the table) after each convolution layer. The shapes for convolution layers follows $(m, n, k, k)$.}
		 \footnotesize{
		 \resizebox{0.38\linewidth}{!}{
			\begin{tabular}{lll}
				\toprule \rowcolor{Gray} \textbf{Parameter}
				& Shape &  Layer hyper-parameter \bigstrut\\
				\midrule
				\textbf{layer1.conv1.weight} & $3 \times 64 \times 3 \times 3$ & stride:$1$;padding:$1$ \bigstrut\\
				\textbf{layer2.conv2.weight} & $64 \times 64 \times 3 \times 3$ & stride:$1$;padding:$1$  \bigstrut\\
				\textbf{pooling.max} & N/A & kernel size:$2$;stride:$2$  \bigstrut\\
				\textbf{layer3.conv3.weight} & $64\times 128 \times 3 \times 3$ & stride:$1$;padding:$1$ \bigstrut\\
				\textbf{layer4.conv4.weight} & $128\times 128 \times 3 \times 3$ & stride:$1$;padding:$1$ \bigstrut\\
                \textbf{pooling.max} & N/A & kernel size:$2$;stride:$2$  \bigstrut\\
				\textbf{layer5.conv5.weight} & $128 \times 256 \times 3 \times 3$ & stride:$1$;padding:$1$  \bigstrut\\
				\textbf{layer6.conv6.weight} & $256\times 256 \times 3 \times 3$ & stride:$1$;padding:$1$  \bigstrut\\
				\textbf{layer7.conv7.weight} & $256 \times 256 \times 3 \times 3$ & stride:$1$;padding:$1$  \bigstrut\\
				\textbf{layer8.conv8.weight} & $256 \times 256 \times 3 \times 3$ & stride:$1$;padding:$1$  \bigstrut\\
                \textbf{pooling.max} & N/A & kernel size:$2$;stride:$2$  \bigstrut\\
				\textbf{layer9.conv9.weight} & $256 \times 512 \times 3 \times 3$ & stride:$1$;padding:$1$  \bigstrut\\
				\textbf{layer10.conv10.weight} & $512 \times 512 \times 3 \times 3$ & stride:$1$;padding:$1$  \bigstrut\\
				\textbf{layer11.conv11.weight} & $512 \times 512 \times 3 \times 3$ & stride:$1$;padding:$1$  \bigstrut\\
				\textbf{layer12.conv12.weight} & $512 \times 512 \times 3 \times 3$ & stride:$1$;padding:$1$  \bigstrut\\
				\textbf{pooling.max} & N/A & kernel size:$2$;stride:$2$  \bigstrut\\
				\textbf{layer13.conv13.weight} & $512 \times 512 \times 3 \times 3$ & stride:$1$;padding:$1$  \bigstrut\\
				\textbf{layer14.conv14.weight} & $512 \times 512 \times 3 \times 3$ & stride:$1$;padding:$1$  \bigstrut\\
				\textbf{layer15.conv15.weight} & $512 \times 512 \times 3 \times 3$ & stride:$1$;padding:$1$  \bigstrut\\
				\textbf{layer16.conv16.weight} & $512 \times 512 \times 3 \times 3$ & stride:$1$;padding:$1$  \bigstrut\\
				\textbf{pooling.avg} & N/A & kernel size:$2$  \bigstrut\\
				\textbf{classifier.weight} & $512 \times 10$ & N/A  \bigstrut\\
				\textbf{classifier.bias} & $10$ & N/A  \bigstrut\\
				\bottomrule
			\end{tabular}}%
			}
	\label{arch:vgg}   
\end{table}

\noindent
\textbf{Transformers.} The transformer architecture~\citep{vaswani2017attention} has been lately revolutionising deep learning. Based on the notion of self-attention, for each input token, it produces a weighted combination of other relevant tokens weighed by the attention weight. Each attention unit has three weight matrices, namely $W_Q$, $W_K$, $W_V$, for query, key and value weights respectively producing the equivalent vectors. Attention is defined as the scaled dot product between key and query. For our translation task, we use the architecture depicted in Tab.~\ref{table:architecture-transformer-decoder}.

\begin{table*}[h!] %
    \centering
    \adjustbox{valign=b}{
    \begin{minipage}{0.48\textwidth}
    \captionsetup{font=small,labelfont=bf}
	\caption{Detailed information of the encoder layer in the Transformer architecture in our experiment}
	\label{table:architecture-transformer-encoder}
	\begin{center}
		 \scriptsize{
                \resizebox{0.8\linewidth}{!}{
			\begin{tabular}{ccc}
				\toprule \rowcolor{Gray} \textbf{Parameter}
				& Shape & Hyper-param. \bigstrut\\
				\midrule
				\textbf{embedding} & $9521\times 512$ & padding index: 1 \bigstrut\\
				\textbf{positional encoding} & N/A & N/A \bigstrut\\
				\textbf{dropout} & N/A & $p=0.1$ \bigstrut\\
				\textbf{encoder.self-attention.wq}($W^Q$) & $512\times 512$ & N/A \bigstrut\\
                \textbf{encoder.self-attention.wk}($W^K$) & $512\times 512$ & N/A \bigstrut\\
                \textbf{encoder.self-attention.wv}($W^V$) & $512\times 512$ & N/A \bigstrut\\
                \textbf{encoder.self-attention.wo}($W^O$) & $512\times 512$ & N/A \bigstrut\\
                \textbf{encoder.self-attention.dropout} & N/A & $p=0.1$ \bigstrut\\
                \textbf{encoder.self-attention.layernorm} & $512$ & $\epsilon=10^{-6}$
                \bigstrut\\
                \textbf{encoder.ffn.layer1} & $512\times 2048$ & N/A
                \bigstrut\\
                \textbf{encoder.ffn.layer2} & $2048\times 512$ & N/A
                \bigstrut\\
                \textbf{encoder.layernorm} & $512$ & $\epsilon=10^{-6}$
                \bigstrut\\
				\textbf{dropout} & N/A & $p=0.1$ \bigstrut\\
				\bottomrule
			\end{tabular}}}%
	\end{center}
    \end{minipage}} %
    \adjustbox{valign=b}{
    \begin{minipage}{0.38\textwidth}
        \captionsetup{font=small,labelfont=bf}
        \caption{Detailed information of the decoder layer in the Transformer architecture in our experiment}
	\label{table:architecture-transformer-decoder}
	\begin{center}
		 \footnotesize{
            \resizebox{0.99\linewidth}{!}{
			\begin{tabular}{lll}
				\toprule \rowcolor{Gray} \textbf{Parameter}
				& Shape & Hyper-param. \bigstrut\\
				\midrule
				\textbf{embedding} & $9521\times 512$ & padding index: 1 \bigstrut\\
				\textbf{positional encoding} & N/A & N/A \bigstrut\\
				\textbf{dropout} & N/A & $p=0.1$ \bigstrut\\
				\textbf{decoder.self-attention.wq}($W^Q$) & $512\times 512$ & N/A \bigstrut\\
                \textbf{decoder.self-attention.wk}($W^K$) & $512\times 512$ & N/A \bigstrut\\
                \textbf{decoder.self-attention.wv}($W^V$) & $512\times 512$ & N/A \bigstrut\\
                \textbf{decoder.self-attention.wo}($W^O$) & $512\times 512$ & N/A \bigstrut\\
                \textbf{decoder.self-attention.dropout} & N/A & $p=0.1$ \bigstrut\\
                \textbf{decoder.self-attention.layernorm} & $512$ & $\epsilon=10^{-6}$
                \bigstrut\\
				\textbf{decoder.enc-attention.wq}($W^Q$) & $512\times 512$ & N/A \bigstrut\\
                \textbf{decoder.enc-attention.wk}($W^K$) & $512\times 512$ & N/A \bigstrut\\
                \textbf{decoder.enc-attention.wv}($W^V$) & $512\times 512$ & N/A \bigstrut\\
                \textbf{decoder.enc-attention.wo}($W^O$) & $512\times 512$ & N/A \bigstrut\\
                \textbf{decoder.enc-attention.dropout} & N/A & $p=0.1$ \bigstrut\\
                \textbf{decoder.enc-attention.layernorm} & $512$ & $\epsilon=10^{-6}$
                \bigstrut\\
                \textbf{decoder.ffn.layer1} & $512\times 2048$ & N/A
                \bigstrut\\
                \textbf{decoder.ffn.layer2} & $2048\times 512$ & N/A
                \bigstrut\\
                \textbf{encoder.layernorm} & $512$ & $\epsilon=10^{-6}$
                \bigstrut\\
				\textbf{dropout} & N/A & $p=0.1$ \bigstrut\\
				\bottomrule
			\end{tabular}
            }
            }
	\end{center}
    \end{minipage}
    }
\end{table*}

\subsection{Hyperparameter Selection}

\textbf{LeNet.} We use a standard configuration that is commonly employed for training LeNet models — a step size of 0.01, a momentum of 0.9, and no weight decay. We train for a total of 20 epochs.

\textbf{VGG and ResNet-18.} Similarly, we use a standard configuration that is commonly employed for training VGG and ResNet-18 models — a step size of 0.01, a momentum of 0.9, weight decay of $1e^{-4}$, and a learning schedule with step size reductions by a factor of 10 at epochs 150 and 250. We train for a total of 300 epochs.

\textbf{ResNet-50.} Similarly, we use a standard configuration that is commonly employed for training ResNet-50 models — a step size of 0.01, a momentum of 0.9, weight decay of $1e^{-4}$, and a learning schedule with step size reductions by a factor of 10 at epochs 30 and 60. We train for a total of 90 epochs.

\textbf{Transformers.} For the Transformer model, we use the Adam optimizer with an initial learning rate at $0.001$, $\beta s=(0.9, 0.98), \epsilon=10^{-8}$ batch size at $256$. We also conduct gradient norm clipping with norm bound at $0.25$. The entire training takes $400$ epochs. For the vanilla warm-up training, we use warm-up epoch $E_{wu}=10$. We enable label smoothing, weight sharing for the source and target word embedding, and weight sharing between target word embedding and the last dense layer. The learning rate schedule follows directly from the one proposed~\cite{vaswani2017attention}.

\subsection{Deciding Against Decomposition}
During inference, if the rank of a given layer is so large that keeping it as a non-decomposed layer is more efficient, we opt not to decompose that particular layer.

\section{Extended evaluation}

\subsection{\tool Recovers Correct Ordering}
\label{sec:correct_ordering}

In the main text, we pointed out that SVD fails to consider data. Indeed, even in the case of linear NN, the acquired singular vectors may exhibit incorrect ordering. To illustrate this problem, we provide a simple example in which we use a matrix $A$ with a rank of $3$. We organize the dataset $\cX$ such that the third singular vector has the highest importance, followed by the second and then the first singular vector in decreasing order of significance. It is clear that SVD doesn't consider the data, and as a result, it cannot comprehend this behavior. Below (in Fig.~\ref{fig:svd_wrong}), we demonstrate how \tool is able to correctly discern the order.

\begin{figure}[h!]
    \centering
    \includegraphics[width=0.4\textwidth]{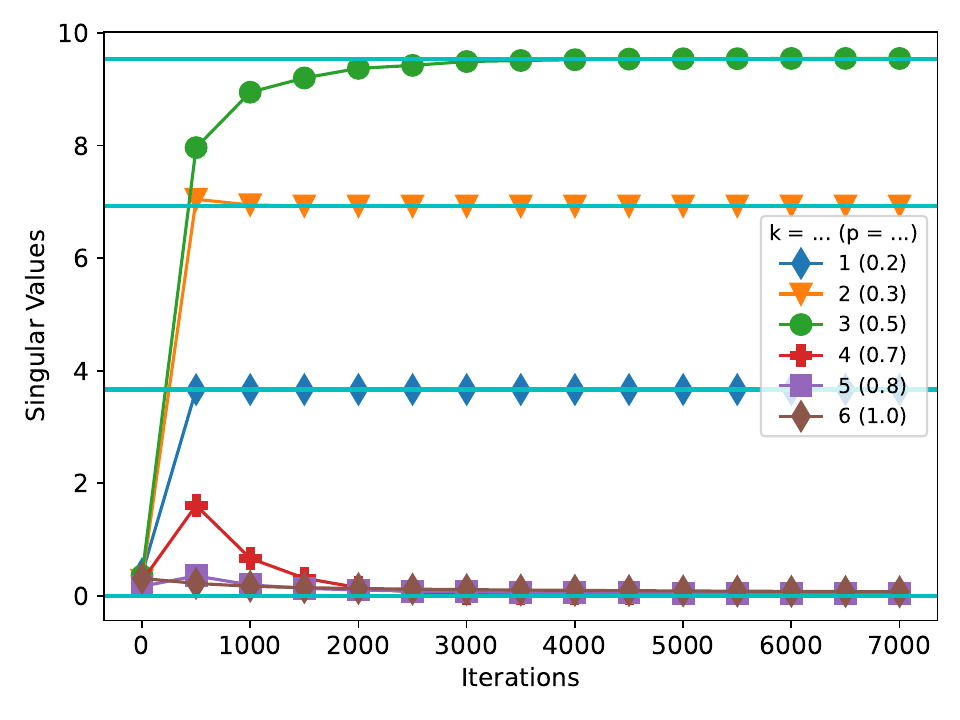}
    \captionsetup{font=small,labelfont=bf}
    \caption{Verification that \tool recovers correct order of importance. Target mapping is of rank $3$, and the dataset is constructed in such a way that the singular vectors have reversed the order of importance. $p$ and $k$ stand for relative and actual rank, respectively.}
    \label{fig:svd_wrong}
\end{figure}

\subsection{Rank Adaptivity of \tool to Data Complexity}
\label{sec:rank_adaptivity}

\camready{So far, we have found that different models can have different ranks on different datasets. However, we did not reach the conclusion that more complex tasks lead to higher ranks because the model architecture is not invariant, i.e., we cannot compare ranks between layers of different dimensionality.}

\begin{figure}[h!]
    \centering
    \includegraphics[width=0.4\textwidth]{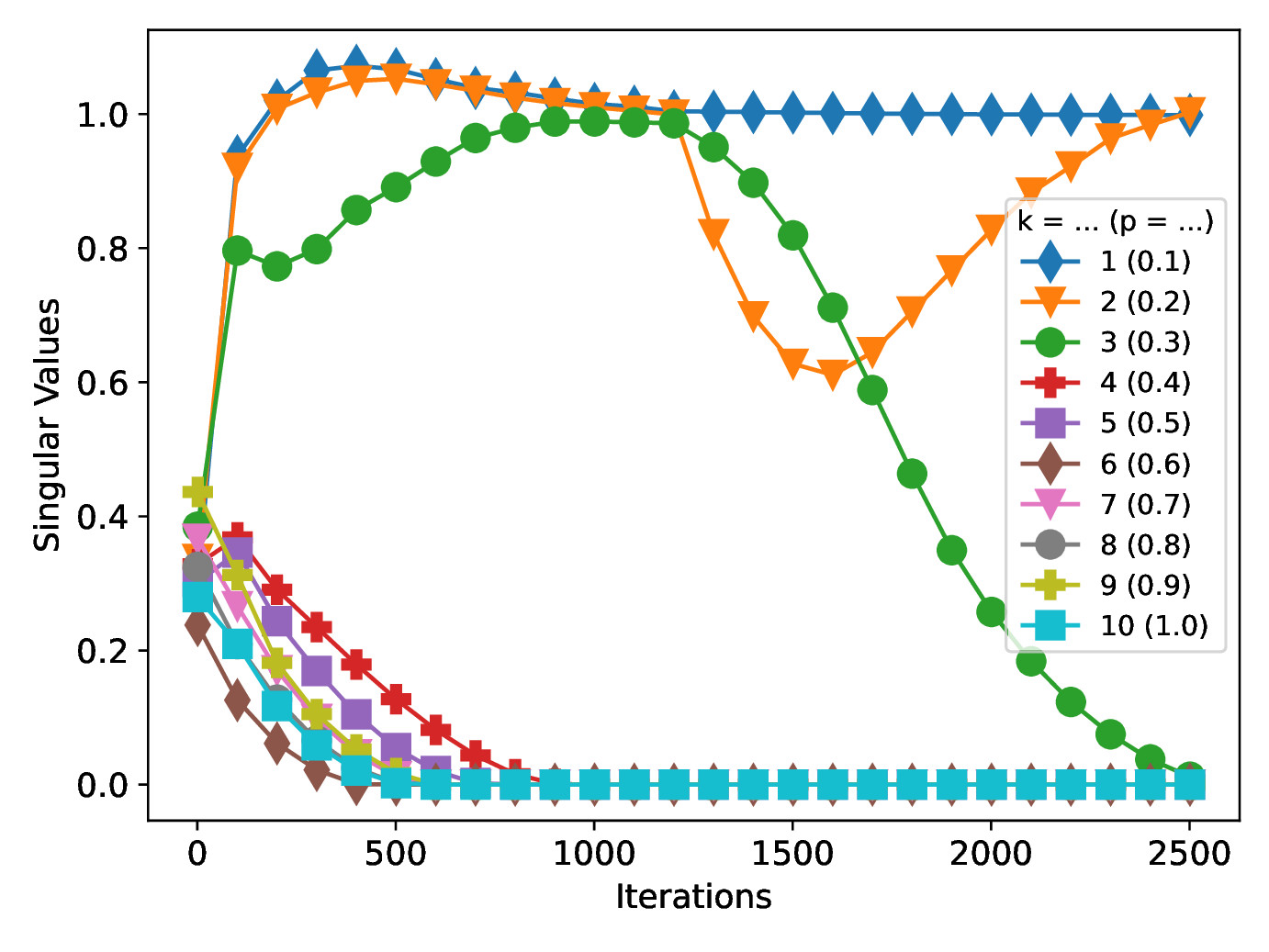}
    \captionsetup{font=small,labelfont=bf}
    \caption{\tool adaptivity when PCA dimensionality drops during training. The plot displays the estimates of singular values. The data distribution has initially 3 directions. It is expected that the top $3$ ranks will converge to value one and the rest to zero. After removing one direction, ranks drop to 2, as the data complexity changes. $p$ and $k$ stand for relative and actual rank, respectively.}
    \label{fig:pca-dim-drop}
\end{figure}

\camready{To test this hypothesis in silo, we designed a simplified experiment on a linear autoencoder example with the same setup as considered in Fig.~\ref{fig:pca}). To showcase the adaptivity of \tool to changing data, we start by training with data that have intrinsic dimension 3. In the middle of the training (iteration 1250/2500), we removed one dimension using projection, thus simplifying the data. In the resulting graph (Fig.~\ref{fig:pca-dim-drop}), we see that while the initial rank had converged to 3, it now drops to 2 as the data complexity changes. For completeness, this adaptivity is further showcased in Appendix~\ref{sec:correct_ordering}, where we have illustrated how SVD fails to consider data-centric factors, whereas Maestro recovers the correct order of importance.}

\subsection{Training Behaviour of \tool}
\vspace{-0.1cm}
\label{app:tool_behaviour}
 
For completeness, we also include an extended version of Fig.~\ref{fig:training_dynamics} from the main paper, where we presented the training dynamics for \tool. Fig.~\ref{fig:rank_evolution},~\ref{fig:final_rank} and ~\ref{fig:loss} present similar plots, but across both MNIST and CIFAR-10. Specifically, Fig.~\ref{fig:rank_evolution} illustrates the evolution of total rank throughout the training steps. We observe that the ranks are pruned incrementally. This aligns with the observations made during Pufferfish~\cite{wang2021pufferfish} training, where the authors suggested warm-start training with full precision to enhance the final model performance.  In our case, the necessity to implement this heuristic is avoided, as \tool prunes rank automatically. Fig.~\ref{fig:final_rank} demonstrates the ranks across layers post-training. 
An intriguing trend is observed: the ranks are nested for increasing $\lambda_{gl}$, suggesting a potential inherent ordering of ranks not only within each layer but also possibly a global one. We provide a preliminary exploration of this fascinating pattern in the subsequent section and intend to probe it more deeply in future studies. We believe this may enhance the rank selection and sampling process. Finally, Fig.~\ref{fig:loss} portrays the evolution of the training loss. Our premise that sampling does not negatively affect training is validated by empirical performance.

\begin{figure*}[h]
  \centering
  \begin{subfigure}[t]{0.32\textwidth}
    \includegraphics[width=\textwidth]{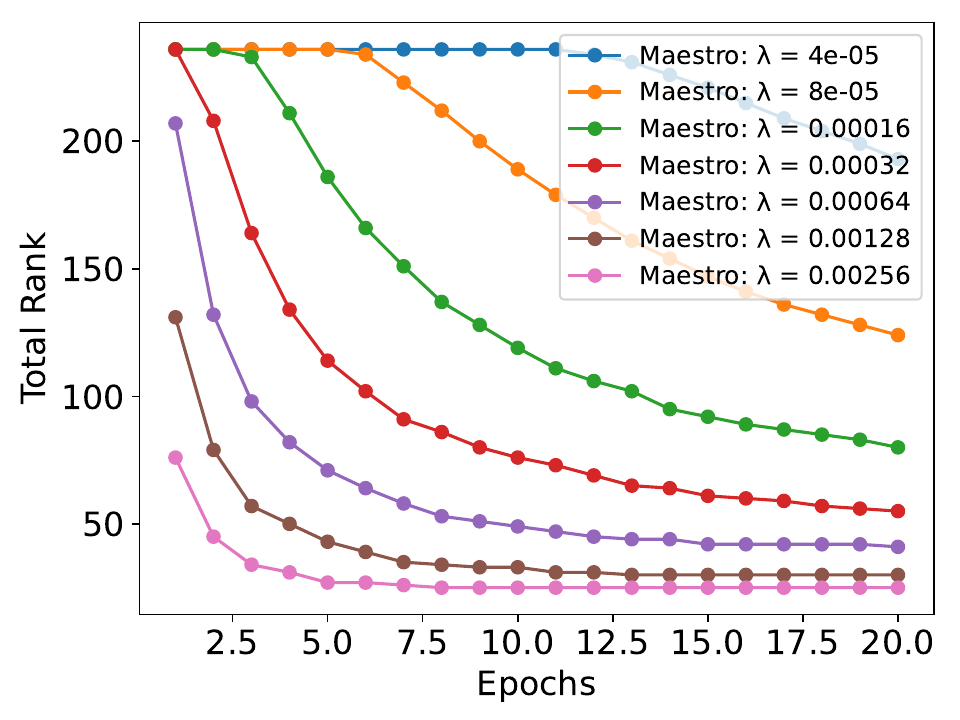}
    \captionsetup{font=small,labelfont=bf}
    \caption{LeNet on MNIST}
  \end{subfigure}
  \hfill
  \begin{subfigure}[t]{0.32\textwidth}
    \includegraphics[width=\textwidth]{plots/resnet_svd_ss_hierarchical_rank.pdf}
    \captionsetup{font=small,labelfont=bf}
    \caption{ResNet-18 on CIFAR10}
  \end{subfigure}
  \hfill
  \begin{subfigure}[t]{0.32\textwidth}
    \includegraphics[width=\textwidth]{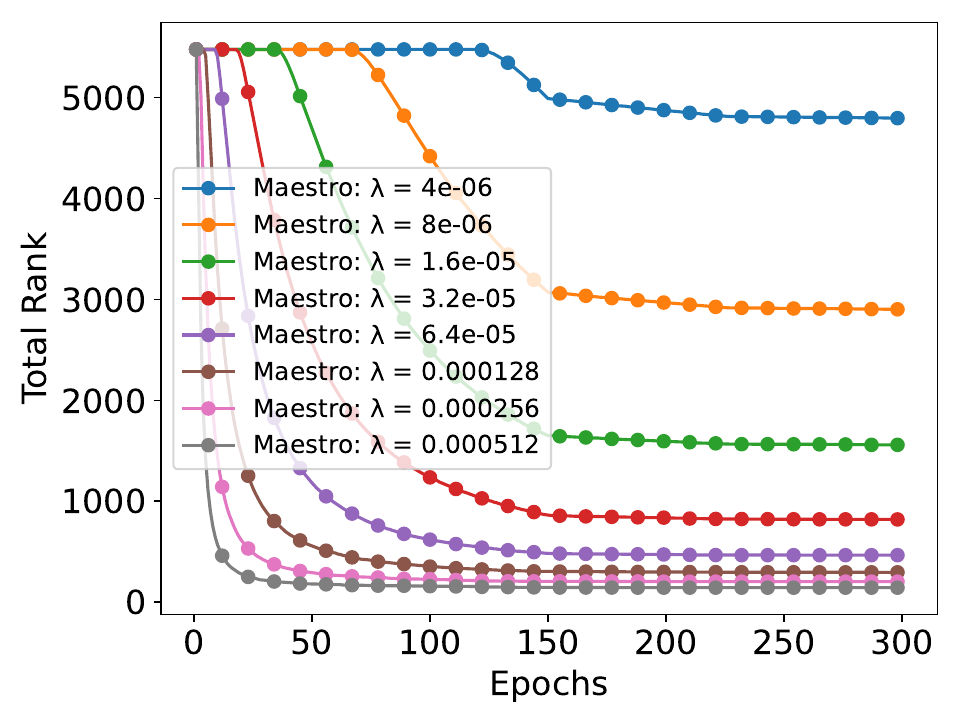}
    \captionsetup{font=small,labelfont=bf}
    \caption{VGG19 on CIFAR10}
  \end{subfigure}
  \captionsetup{font=small,labelfont=bf}
  \caption{Total rank ($\sum_{i=1}^d r_i$) progression during training.}
  \label{fig:rank_evolution}
\end{figure*}
\begin{figure*}[h]
  \centering
  \begin{subfigure}[t]{0.32\textwidth}
    \includegraphics[width=\textwidth]{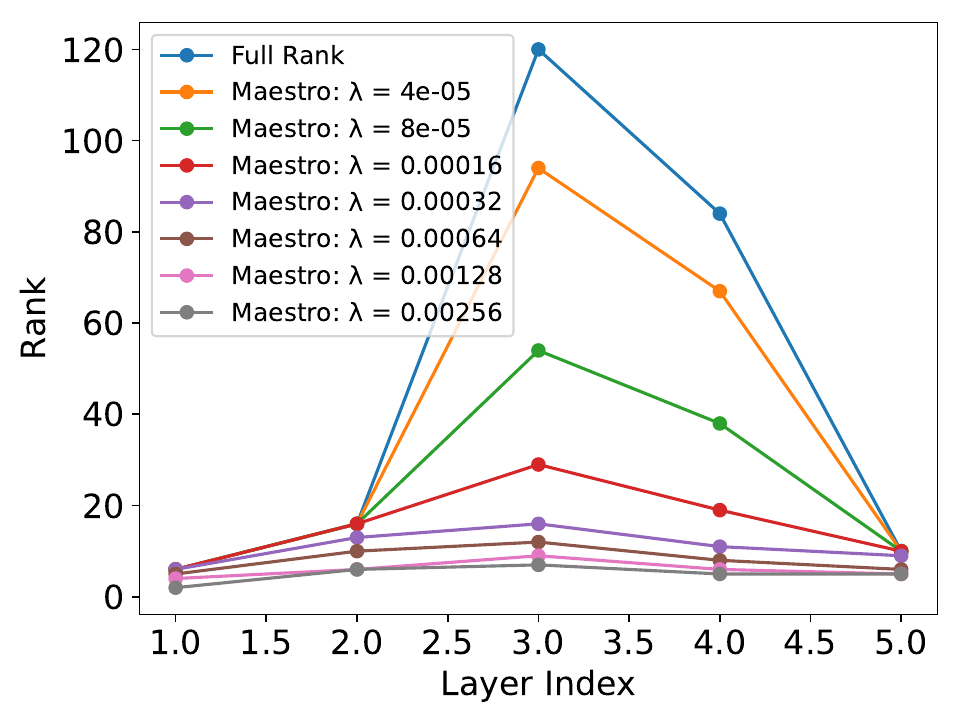}
    \captionsetup{font=small,labelfont=bf}
    \caption{LeNet on MNIST}
  \end{subfigure}
  \hfill
  \begin{subfigure}[t]{0.32\textwidth}
    \includegraphics[width=\textwidth]{plots/resnet_svd_ss_hierarchical_final_rank.pdf}
    \captionsetup{font=small,labelfont=bf}
    \caption{ResNet-18 on CIFAR10}
  \end{subfigure}
  \hfill
  \begin{subfigure}[t]{0.32\textwidth}
    \includegraphics[width=\textwidth]{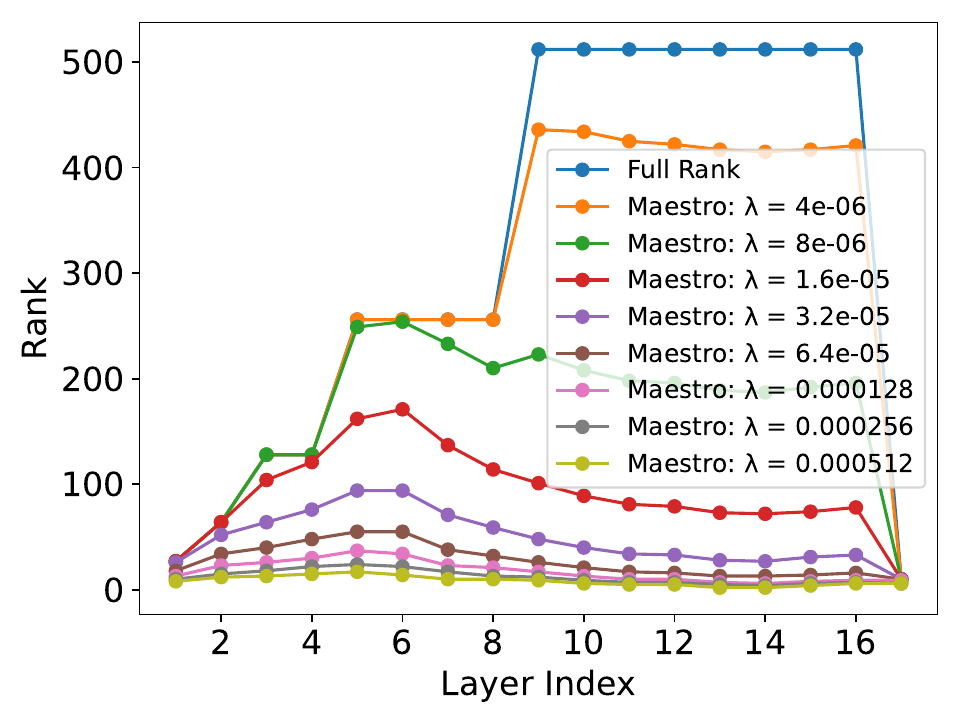}
    \captionsetup{font=small,labelfont=bf}
    \caption{VGG19 on CIFAR10}
  \end{subfigure}
  \captionsetup{font=small,labelfont=bf}
  \caption{Ranks $r_i$'s across different layers after training.}
  \label{fig:final_rank}
\end{figure*}

\begin{figure*}[h]
  \centering
  \begin{subfigure}[t]{0.32\textwidth}
    \includegraphics[width=\textwidth]{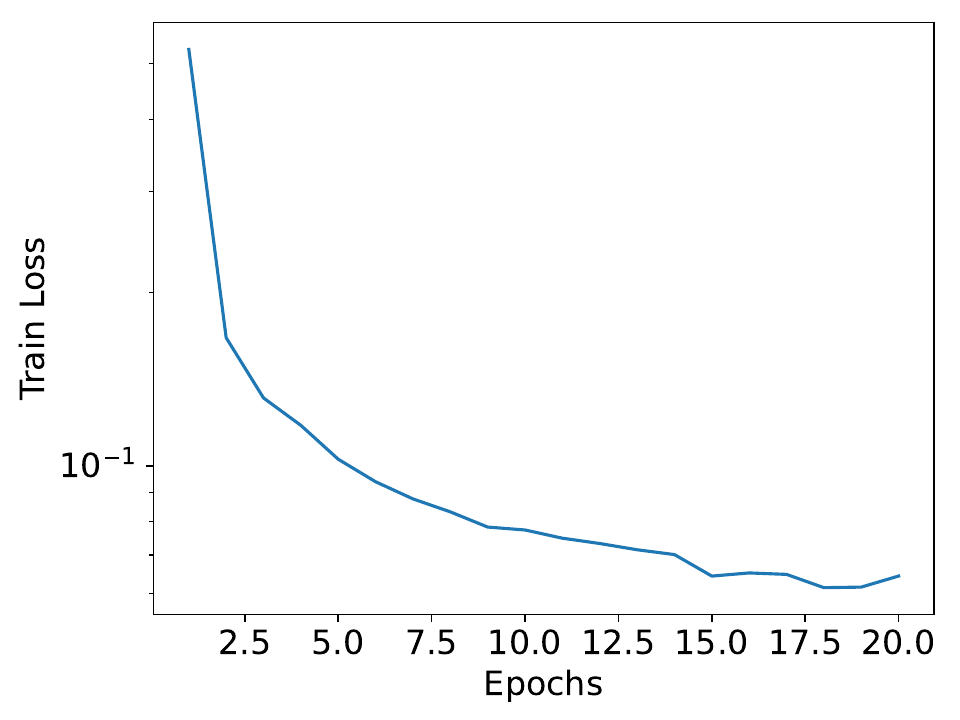}
    \captionsetup{font=small,labelfont=bf}
    \caption{LeNet on MNIST}
  \end{subfigure}
  \hfill
  \begin{subfigure}[t]{0.32\textwidth}
    \includegraphics[width=\textwidth]{plots/resnet_svd_ss_train_loss.pdf}
    \captionsetup{font=small,labelfont=bf}
    \caption{ResNet-18 on CIFAR10}
  \end{subfigure}
  \hfill
  \begin{subfigure}[t]{0.32\textwidth}
    \includegraphics[width=\textwidth]{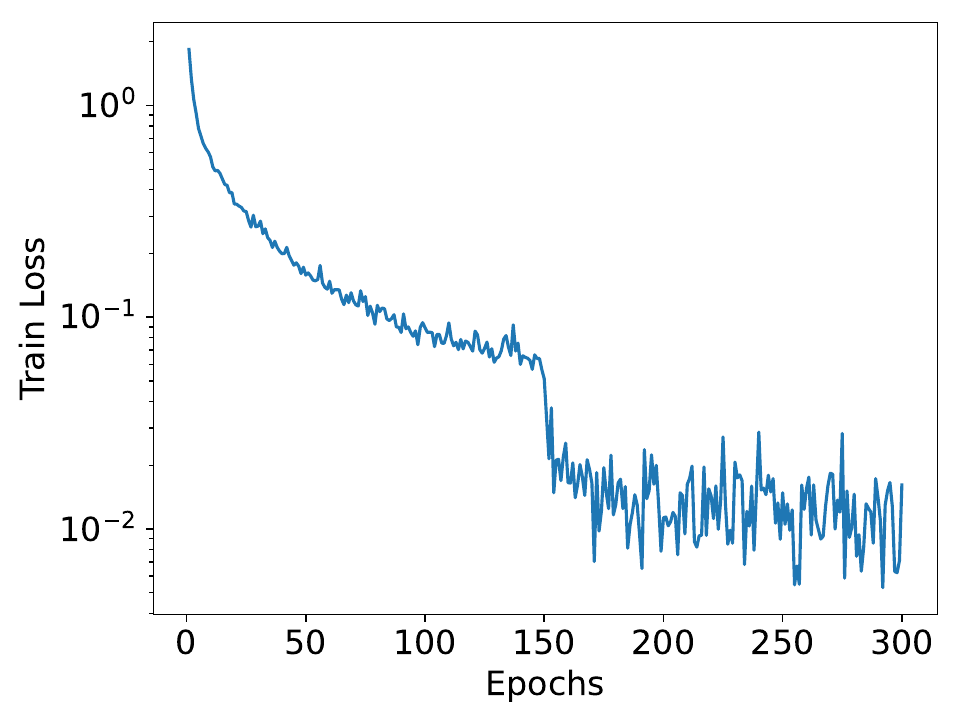}
    \captionsetup{font=small,labelfont=bf}
    \caption{VGG19 on CIFAR10}
  \end{subfigure}
  \captionsetup{font=small,labelfont=bf}
  \caption{Convergence of \tool with $\lambda_{gl} = 0$.}
  \label{fig:loss}
\end{figure*}

\subsection{Model Size-Accuracy Trade-Off at Training and Deployment Time}
\label{app:tradeoff}

In addition to the original illustrations, we present an extended interpretation of Fig.~\ref{fig:acc_latency_trade_off}, where we depict diverse strategies to maintain a balance between model size and accuracy in the process of model training and deployment. In Fig.~\ref{fig:acc_latency_trade_off_svd}, we demonstrate the effective pruning of \tool ($\lambda_{gl}=0$) for deployment, utilizing the greedy search methodology discussed in Section~\ref{sec:deploy}. This is juxtaposed with the greedy pruning of a model not originally factorized but later factorized through SVD. Our findings reveal that this straightforward baseline does not match the performance of \tool's learned decomposition, leading to a considerable performance drop.

\begin{figure*}[h]
  \centering
  \begin{subfigure}[t]{0.32\textwidth}
    \includegraphics[width=\textwidth]{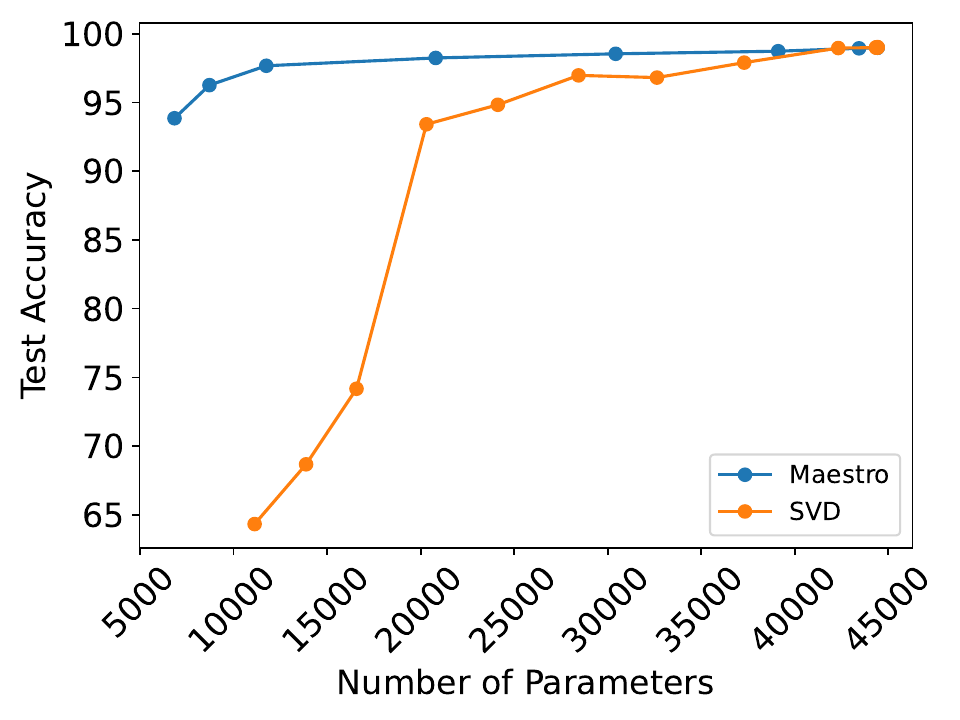}
    \captionsetup{font=small,labelfont=bf}
    \caption{LeNet on MNIST}
  \end{subfigure}
  \hfill
  \begin{subfigure}[t]{0.32\textwidth}
    \includegraphics[width=\textwidth]{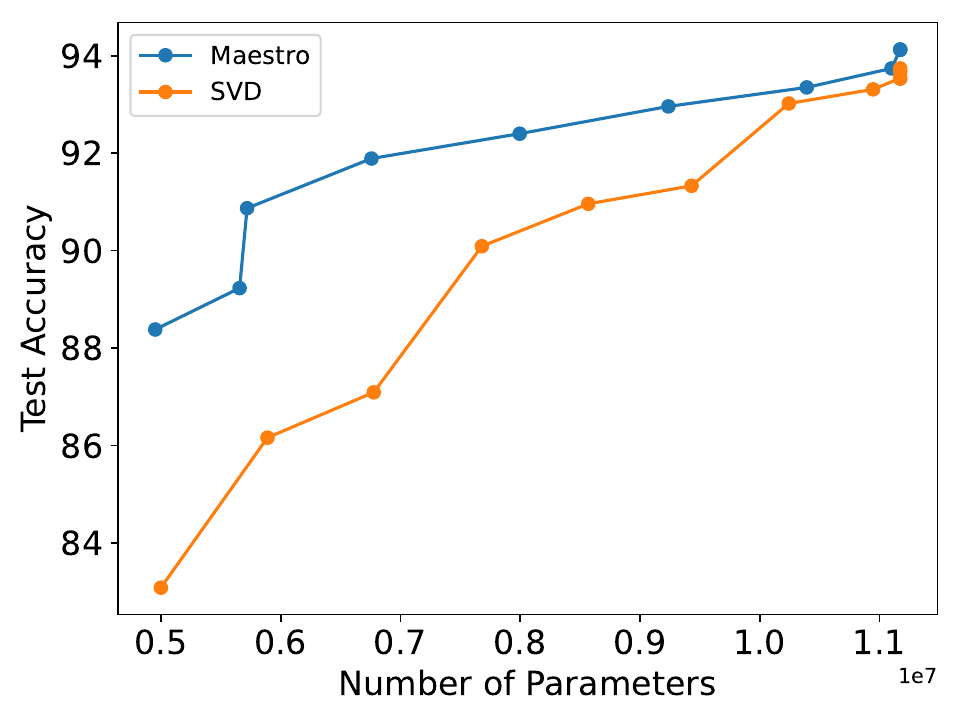}
    \captionsetup{font=small,labelfont=bf}
    \caption{ResNet-18 on CIFAR10}
  \end{subfigure}
  \hfill
  \begin{subfigure}[t]{0.32\textwidth}
    \includegraphics[width=\textwidth]{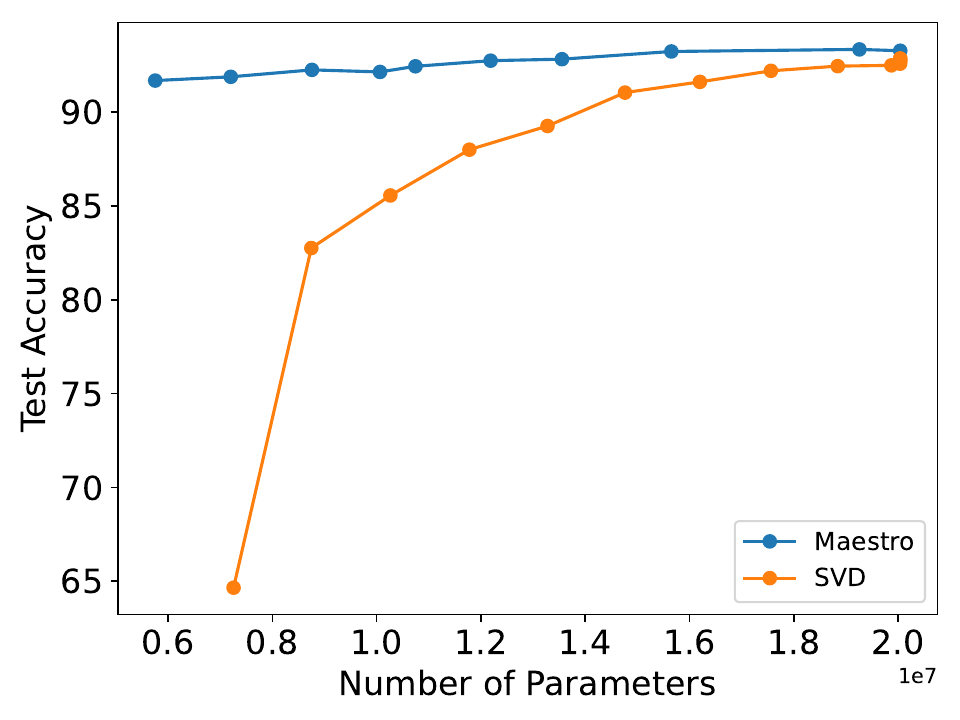}
    \captionsetup{font=small,labelfont=bf}
    \caption{VGG19 on CIFAR10}
  \end{subfigure}
  \captionsetup{font=small,labelfont=bf}
  \captionsetup{font=small,labelfont=bf}
  \caption{Accuracy-latency trade-off comparing \tool ($\lambda_{\text{gl} = 0}$) and SVD.}
  \label{fig:acc_latency_trade_off_svd}
\end{figure*}

Subsequently, Fig.~\ref{fig:hierarchical_group_lasso} displays the end accuracy and the count of model parameters corresponding to various hierarchical group lasso penalties. This results in an optimal compromise between latency and accuracy for both the training and inference stages. It's worth noting, though, that each model was trained separately, in contrast to greedy pruning, which demands just a single training round. Additionally, we scrutinize the training expense for each model illustrated in Fig.~\ref{fig:hierarchical_group_lasso}, the results of which are exhibited in Tables~\ref{tab:lenet_gp_lambda}, \ref{tab:resnet_gp_lambda}, \ref{tab:vgg_gp_lambda}, \ref{tab:transf_gp_lambda} and \ref{tab:resnet50_gp_lambda}, where we display and the final accuracy of the model, MACs and the number of parameters for inference, and relative total training cost in terms of the number of model parameters and MACs compared to the non-factorized model.  Interestingly, smaller models are not only advantageous in terms of inference efficiency, but they can also be trained at a small portion of the cost required by full-rank models. On the downside, the smallest models cause a non-negligible reduction in performance.

\begin{figure*}[h]
  \centering
  \begin{subfigure}[t]{0.32\textwidth}
    \includegraphics[width=\textwidth]{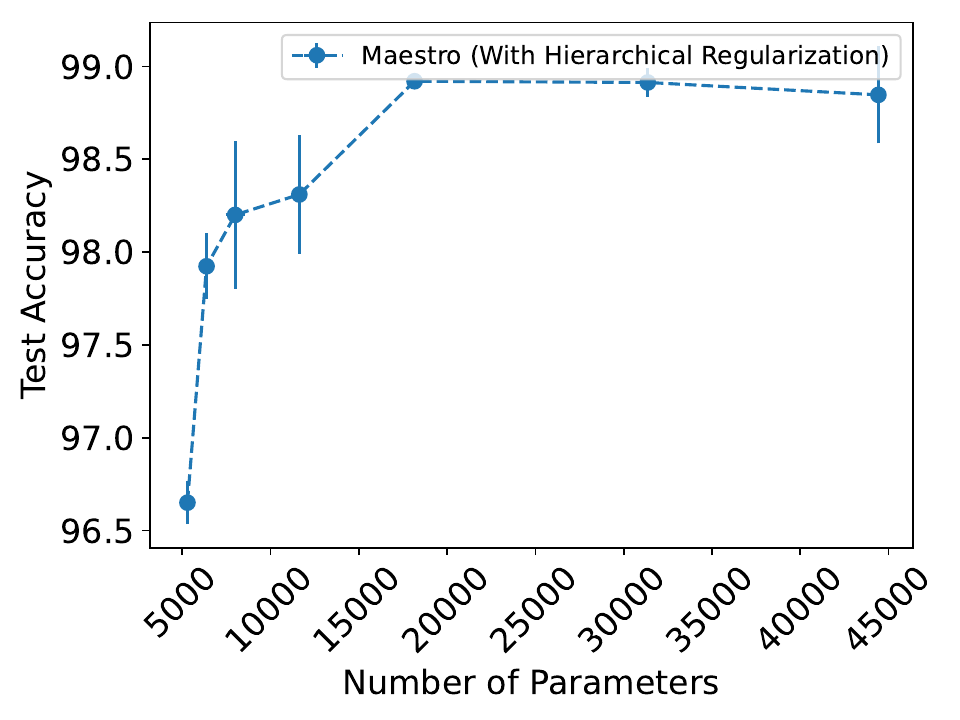}
    \captionsetup{font=small,labelfont=bf}
    \caption{LeNet on MNIST}
  \end{subfigure}
  \hfill
  \begin{subfigure}[t]{0.32\textwidth}
    \includegraphics[width=\textwidth]{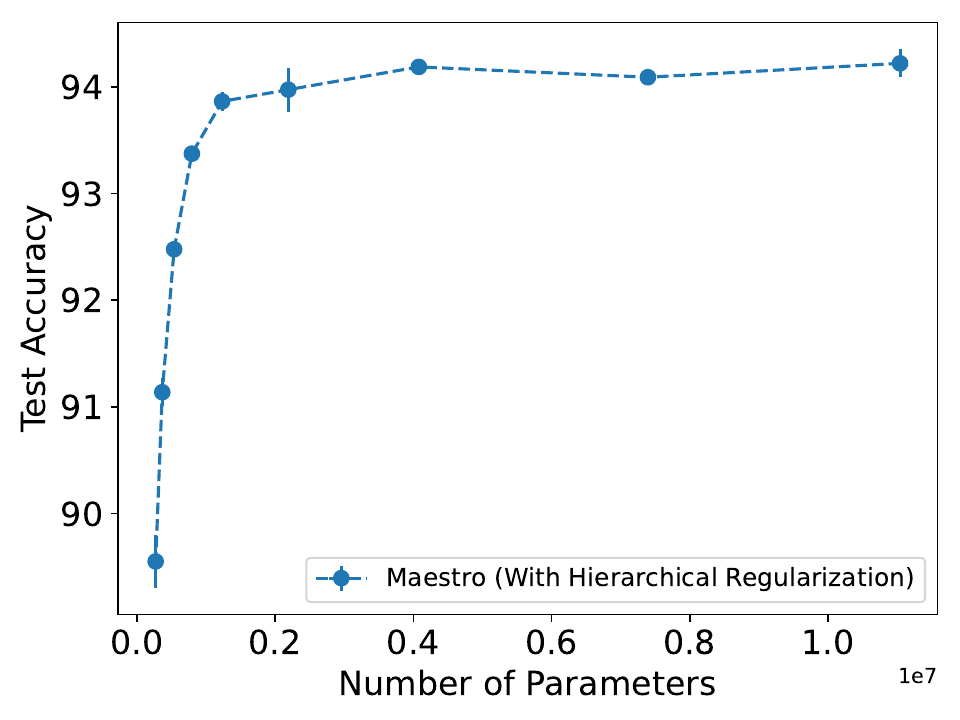}
    \captionsetup{font=small,labelfont=bf}
    \caption{ResNet-18 on CIFAR10}
  \end{subfigure}
  \hfill
  \begin{subfigure}[t]{0.32\textwidth}
    \includegraphics[width=\textwidth]{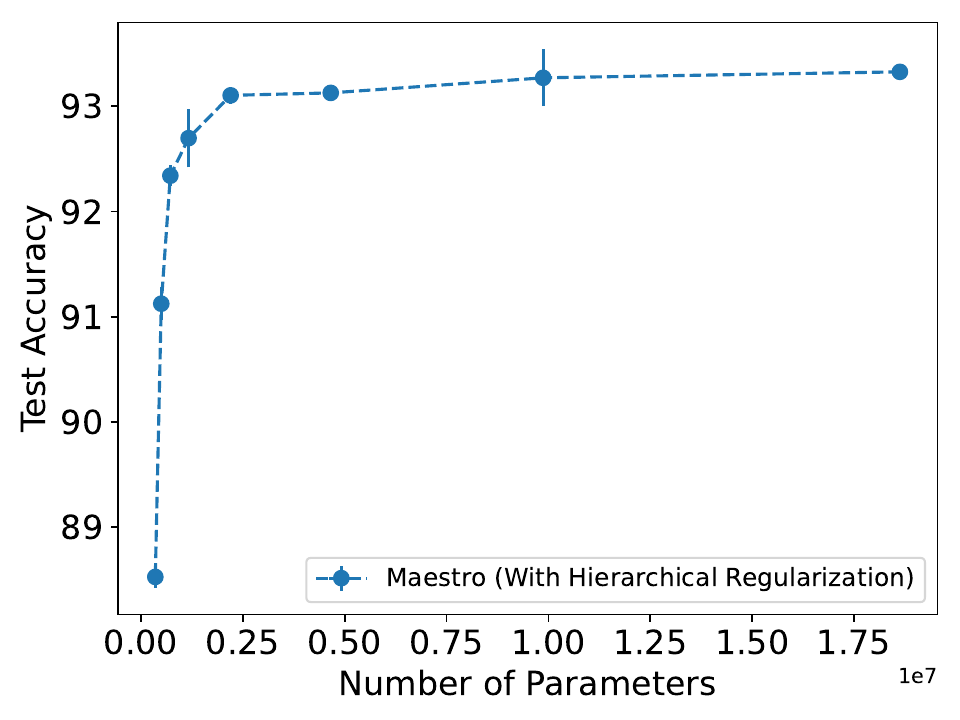}
    \captionsetup{font=small,labelfont=bf}
    \caption{VGG19 on CIFAR10}
  \end{subfigure}
  \captionsetup{font=small,labelfont=bf}
  \caption{Impact of hierarchical group lasso on the accuracy-memory trade-off. Exact values are provided in Tables~\ref{tab:lenet_gp_lambda}, \ref{tab:resnet_gp_lambda} and \ref{tab:vgg_gp_lambda}, respectively.}
  \label{fig:hierarchical_group_lasso}
\end{figure*}

\begin{table}[h]
  \centering
  \captionsetup{font=small,labelfont=bf}
  \caption{LeNet performance on MNIST for different regularization parameters. The last column in the table displays the relative total training cost in terms of the number of Multiply-Accumulate operations (MACs) and model parameters, compared to the non-factorized model.}
  \label{tab:lenet_gp_lambda}
  \begin{adjustbox}{max width=.8\linewidth}
      \begin{tabular}{llllcc}
          \toprule
          \rowcolor{Gray} Variant                         & Acc. (\%)     & MACs (Inf.) &  Params. (Inf.) & Rel. MACs / Params. (Train.) \\ \midrule
          Non-Factorized     & 98.99{\tiny$\pm$0.09}        &  281640{\tiny$\pm$0} (1.00$\times$)      &   44426{\tiny$\pm$0} (1.00$\times$)  &  1.00$\times$ / 1.00$\times$       \\
          \tool ($\lambda_{gp} = 0.$) & 99.06{\tiny$\pm$0.09}        &  281640{\tiny$\pm$0} (1.00$\times$)    &   44426{\tiny$\pm$0}(1.00$\times$) & 1.14$\times$/ 1.49$\times$   \\
          \tool ($\lambda_{gp} = 8e^{-5}$) & 98.91{\tiny$\pm$0.09}        &  268577{\tiny$\pm$389} (0.95$\times$)       &   31363{\tiny$\pm$0} (0.71$\times$) & 1.08$\times$/ 1.14$\times$    \\
          \tool ($\lambda_{gp} = 16e^{-5}$) & 98.92{\tiny$\pm$0.05}        &  255369{\tiny$\pm$217} (0.91$\times$) &   44426{\tiny$\pm$217} (0.41$\times$) & 1.06$\times$/ 0.80$\times$   \\
          \tool ($\lambda_{gp} = 32e^{-5}$) & 98.31{\tiny$\pm$0.39}        &  237084{\tiny$\pm$6268} (0.84$\times$)     &   18155{\tiny$\pm$271} (0.26$\times$) & 0.93$\times$/ 0.53$\times$  \\
          \tool ($\lambda_{gp} = 64e^{-5}$) & 98.20{\tiny$\pm$0.49}        &  178165{\tiny$\pm$19098} (0.63$\times$)     &   7996{\tiny$\pm$662} (0.18$\times$) & 0.77$\times$/ 0.33$\times$   \\
          \tool ($\lambda_{gp} = 128e^{-5}$) & 97.92{\tiny$\pm$0.22}        &  131789{\tiny$\pm$8965} (0.47$\times$)      &   6375{\tiny$\pm$77} (0.14$\times$) & 0.54$\times$/ 0.21$\times$   \\
          \tool ($\lambda_{gp} = 256e^{-5}$) & 96.65{\tiny$\pm$0.14}        &  99969{\tiny$\pm$6252} (0.35$\times$)      &   5293{\tiny$\pm$214} (0.12$\times$) & 0.39$\times$/ 0.14$\times$   \\
          \bottomrule
      \end{tabular}
  \end{adjustbox}
\end{table}
\begin{table}[h]
  \centering
  \captionsetup{font=small,labelfont=bf}
  \caption{ResNet-18 performance on CIFAR10 for different regularization parameters. The last column in the table displays the relative total training cost in terms of the number of Multiply-Accumulate operations (MACs) and model parameters, compared to the non-factorized model.}
  \label{tab:resnet_gp_lambda}
  \begin{adjustbox}{max width=.8\linewidth}
      \begin{tabular}{llllc}
          \toprule
          \rowcolor{Gray} Variant                         & Acc. (\%)     & GMACs (Inf.) &  Params. (M) (Inf.) & Rel. MACs / Params. (Train.) \\ \midrule
          Non-Factorized     & 93.86{\tiny$\pm$0.20}        &  0.56{\tiny$\pm$0} (1.00$\times$)     &   11.2{\tiny$\pm$0} (1.00$\times$)  &  1.00$\times$ / 1.00$\times$         \\
          \tool ($\lambda_{gp} = 0.$) & 94.04{\tiny$\pm$0.10}        &  0.56{\tiny$\pm$0} (1.00$\times$)      &   11.2{\tiny$\pm$0} (1.00$\times$)  &  1.10$\times$ / 1.13$\times$  \\
          \tool ($\lambda_{gp} = 4e^{-6}$) & 94.22{\tiny$\pm$0.16}        &  0.55{\tiny$\pm$0.0047} (1.00$\times$)       &   11.1{\tiny$\pm$0.030} (0.99$\times$)  &  1.09$\times$ / 1.10$\times$  \\
          \tool ($\lambda_{gp} = 8e^{-6}$) & 94.09{\tiny$\pm$0.01}        &  0.49{\tiny$\pm$0.0002} (0.89$\times$) &   7.41{\tiny$\pm$0.004} (0.66$\times$)  &  1.00$\times$ / 0.85$\times$  \\
          \tool ($\lambda_{gp} = 16e^{-6}$) & 94.19{\tiny$\pm$0.07}        &  0.39{\tiny$\pm$0.0008} (0.70$\times$)       &   4.08{\tiny$\pm$0.020} (0.37$\times$)  &  0.83$\times$ / 0.58$\times$   \\
          \tool ($\lambda_{gp} = 32e^{-6}$) & 93.97{\tiny$\pm$0.25}        &  0.25{\tiny$\pm$0.0013} (0.45$\times$)      &   2.19{\tiny$\pm$0.007} (0.20$\times$)  &  0.60$\times$ / 0.36$\times$  \\
          \tool ($\lambda_{gp} = 64e^{-6}$) & 93.86{\tiny$\pm$0.11}        &  0.15{\tiny$\pm$0.0006} (0.27$\times$)      &   1.23{\tiny$\pm$0.004} (0.11$\times$)  &  0.39$\times$ / 0.22$\times$  \\
          \tool ($\lambda_{gp} = 128e^{-6}$) & 93.37{\tiny$\pm$0.07}        &  0.094{\tiny$\pm$0.0006} (0.17$\times$)      &   0.79{\tiny$\pm$0.009} (0.07$\times$)  &  0.25$\times$ / 0.13$\times$  \\
          \tool ($\lambda_{gp} = 256e^{-6}$) & 92.48{\tiny$\pm$0.04}        &  0.064{\tiny$\pm$0.0002} (0.12$\times$)      &   0.54{\tiny$\pm$0.006} (0.05$\times$)  &  0.16$\times$ / 0.08$\times$  \\
          \tool ($\lambda_{gp} = 512e^{-6}$) & 91.14{\tiny$\pm$0.16}        &  0.044{\tiny$\pm$0.0004} (0.08$\times$)      &   0.37{\tiny$\pm$0.007} (0.03$\times$)  &  0.11$\times$ / 0.05$\times$  \\
          \tool ($\lambda_{gp} = 1024e^{-6}$) & 89.55{\tiny$\pm$0.30}        &  0.032{\tiny$\pm$0.0002} (0.06$\times$)      &   0.27{\tiny$\pm$0.007} (0.02$\times$)  &  0.07$\times$ / 0.03$\times$  \\
          \bottomrule
      \end{tabular}
  \end{adjustbox}
\end{table}
\begin{table}[h]
  \centering
  \captionsetup{font=small,labelfont=bf}
  \caption{VGG19 performance on CIFAR10 for different regularization parameters. The last column in the table displays the relative total training cost in terms of the number of Multiply-Accumulate operations (MACs) and model parameters, compared to the non-factorized model.}
  \label{tab:vgg_gp_lambda}
  \begin{adjustbox}{max width=.8\linewidth}
      \begin{tabular}{llllc}
          \toprule
          \rowcolor{Gray} Variant                         & Acc. (\%)   & GMACs (Inf.) &  Params. (M) (Inf.) & Rel. MACs / Params. (Train.) \\ \midrule
          Non-Factorized     & 92.94{\tiny$\pm$0.17}        &  0.40{\tiny$\pm$0} (1.00$\times$)      &   20{\tiny$\pm$0} (1.00$\times$)  &  1.00$\times$ / 1.00$\times$         \\
          \tool ($\lambda_{gp} = 0.$) & 93.06{\tiny$\pm$0.17}        &  0.40{\tiny$\pm$0} (1.00$\times$)      &   20{\tiny$\pm$0} (1.00$\times$)  &  1.10$\times$ / 1.12$\times$   \\
          \tool ($\lambda_{gp} = 4e^{-6}$) & 93.33{\tiny$\pm$0.08}        &  0.39{\tiny$\pm$0.0017} (0.97$\times$)      &   18.8{\tiny$\pm$0} (0.94$\times$)  &  1.06$\times$ / 1.04$\times$   \\
          \tool ($\lambda_{gp} = 8e^{-6}$) & 93.27{\tiny$\pm$0.33}        &  0.30{\tiny$\pm$0.0017} (0.76$\times$)&   9.91{\tiny$\pm$0.008} (0.49$\times$)  &  0.90$\times$ / 0.73$\times$   \\
          \tool ($\lambda_{gp} = 16e^{-6}$) & 93.13{\tiny$\pm$0.07}        &  0.21{\tiny$\pm$0.0014} (0.53$\times$)      &   4.66{\tiny$\pm$0.052} (0.23$\times$)  &  0.69$\times$ / 0.46$\times$   \\
          \tool ($\lambda_{gp} = 32e^{-6}$) & 93.10{\tiny$\pm$0.10}        &  0.13{\tiny$\pm$0.0009} (0.33$\times$)      &   2.20{\tiny$\pm$0.025} (0.11$\times$)  &  0.47$\times$ / 0.27$\times$   \\
          \tool ($\lambda_{gp} = 64e^{-6}$) & 92.70{\tiny$\pm$0.34}        &  0.08{\tiny$\pm$0.0005} (0.20$\times$)      &   1.17{\tiny$\pm$0.010} (0.06$\times$)  &  0.30$\times$ / 0.16$\times$   \\
          \tool ($\lambda_{gp} = 128e^{-6}$) & 92.34{\tiny$\pm$0.12}        &  0.05{\tiny$\pm$0.0005} (0.13$\times$)      &   0.72{\tiny$\pm$0.002} (0.04$\times$)  &  0.19$\times$ / 0.09$\times$   \\
          \tool ($\lambda_{gp} = 256e^{-6}$) & 91.12{\tiny$\pm$0.19}        &  0.04{\tiny$\pm$0.0007} (0.09$\times$)      &   0.50{\tiny$\pm$0.023} (0.02$\times$)  &  0.12$\times$ / 0.05$\times$   \\
          \tool ($\lambda_{gp} = 512e^{-6}$) & 88.53{\tiny$\pm$0.13}        &  0.03{\tiny$\pm$0.0003} (0.06$\times$)      &   0.35{\tiny$\pm$0.003} (0.02$\times$)  &  0.08$\times$ / 0.03$\times$   \\
          \bottomrule
      \end{tabular}
  \end{adjustbox}
\end{table}
\begin{table}[!h]
  \centering
  \captionsetup{font=small,labelfont=bf}
  \caption{Transformer performance on Multi30k for different regularization parameters. The last column in the table displays the relative total training cost in terms of the number of Multiply-Accumulate operations (MACs) and model parameters, compared to the non-factorized model.}
  \label{tab:transf_gp_lambda}
  \begin{adjustbox}{max width=.8\linewidth}
      \begin{tabular}{lllllc}
          \toprule
          \rowcolor{Gray} Variant                         & Acc. (\%) & Ppl.   & GMACs (Inf.) &  Params. (M) (Inf.) & Rel. MACs / Params. (Train.) \\ \midrule
          Non-Factorized     & 65.33{\tiny$\pm$1.13}  & 9.85{\tiny$\pm$0.10}      &  1.370{\tiny$\pm$0.0000} (1.00$\times$)      &   53.9{\tiny$\pm$0.000} (1.00$\times$)  &  1.00$\times$ / 1.00$\times$         \\
          \tool ($\lambda_{gp} = 0.32$) & 61.30{\tiny$\pm$0.26}   & 12.99{\tiny$\pm$0.31}     &  1.125{\tiny$\pm$0.0030} (0.82$\times$)      &   45.1{\tiny$\pm$0.101} (0.84$\times$)  &  1.03$\times$ / 1.14$\times$   \\
          \tool ($\lambda_{gp} = 0.64$) & 63.78{\tiny$\pm$0.14}   & 9.37{\tiny$\pm$0.32}     &  0.957{\tiny$\pm$0.0112} (0.70$\times$)       &   39.1{\tiny$\pm$0.413} (0.73$\times$)  &  0.95$\times$ / 1.05$\times$   \\
          \tool ($\lambda_{gp} = 1.28$) & 66.14{\tiny$\pm$0.08}    & 7.02{\tiny$\pm$0.17}    &  0.570{\tiny$\pm$0.0088} (0.42$\times$)       &   25.3{\tiny$\pm$0.315} (0.47$\times$)  &  0.75$\times$ / 0.86$\times$   \\
          \tool ($\lambda_{gp} = 2.56$) & 66.08{\tiny$\pm$0.09}    & 6.90{\tiny$\pm$0.07}    &  0.248{\tiny$\pm$0.0032} (0.18$\times$)       &   13.8{\tiny$\pm$0.113} (0.26$\times$)  &  0.47$\times$ / 0.58$\times$   \\
          \tool ($\lambda_{gp} = 5.12$) & 57.70{\tiny$\pm$0.13}    & 13.97{\tiny$\pm$0.43}    &  0.123{\tiny$\pm$0.0002} (0.9$\times$)       &    9.3{\tiny$\pm$0.001} (0.17$\times$)  &  0.28$\times$ / 0.39$\times$   \\
          \bottomrule
      \end{tabular}
  \end{adjustbox}
\end{table}

\begin{table}[!h]
  \centering
  \captionsetup{font=small,labelfont=bf}
  \caption{ResNet50 performance on ImageNet-1k for different regularization parameters. The last column in the table displays the relative total training cost in terms of the number of Multiply-Accumulate operations (MACs) and model parameters, compared to the non-factorized model.}
  \label{tab:resnet50_gp_lambda}
  \begin{adjustbox}{max width=.75\linewidth}
      \begin{tabular}{llllc}
          \toprule
          \rowcolor{Gray} Variant          & Acc. (\%)     & GMACs (Inf.)             &  Params. (M) (Inf.)   &      Rel. MACs / Params. (Train) \\ 
          \midrule
          \multicolumn{5}{l}{\textbf{No decomposition}} \\
          Non-Factorized                   & 76.00         &  4.12 (1.00$\times$)       &   25.56 (1.00$\times$) & 1.00$\times$ / 1.00$\times$    \\
          \midrule
          \multicolumn{5}{l}{\textbf{Not decomposing first four blocks and last layer}} \\
          \tool ($\lambda_{gp} = 2e^{-6}$) & 76.04         & 3.43 (0.83$\times$)        & 14.02 (0.55$\times$)   & 0.87$\times$ / 0.64$\times$    \\
          \tool ($\lambda_{gp} = 4e^{-6}$) & 75.74         & 3.39 (0.82$\times$)        & 13.11 (0.51$\times$)   & 0.85$\times$ / 0.59$\times$    \\
          \tool ($\lambda_{gp} = 8e^{-6}$) & 75.15         & 3.21 (0.78$\times$)        & 11.46 (0.45$\times$)   & 0.83$\times$ / 0.55$\times$    \\
          \midrule
          \multicolumn{5}{l}{\textbf{Decomposing all layers}} \\
          \tool ($\lambda_{gp} = 0.$)      & 72.82         & 4.12 (1.00$\times$)        & 25.56 (1.00$\times$)   & 1.22$\times$ / 1.24$\times$    \\
          \tool ($\lambda_{gp} = 1e^{-6}$) & 72.81         & 3.62 (0.88$\times$)        & 18.77 (0.73$\times$)   & 1.00$\times$ / 0.87$\times$    \\
          \tool ($\lambda_{gp} = 2e^{-6}$) & 72.07         & 2.66 (0.65$\times$)        & 11.54 (0.45$\times$)   & 0.76$\times$ / 0.59$\times$    \\
          \tool ($\lambda_{gp} = 4e^{-6}$) & 71.54         & 2.01 (0.49$\times$)        & 9.21  (0.36$\times$)   & 0.57$\times$ / 0.57$\times$    \\
          \tool ($\lambda_{gp} = 8e^{-6}$) & 71.02         & 1.69 (0.41$\times$)        & 7.21  (0.28$\times$)   & 0.50$\times$ / 0.39$\times$    \\
          \bottomrule
      \end{tabular}
  \end{adjustbox}
\end{table}
 
Lastly, we delve deeper into the observation of nested ranks with increasing $\lambda_{gl}$. Fig.~\ref{fig:one_step_pruning} outlines the performance of \tool ($\lambda_{gl} = 0$) across various ranks chosen by smaller models \tool ($\lambda_{gl} > 0$). We observe that \tool ($\lambda_{gl} = 0$) delivers impressive results—for example, we can reduce its parameters by 10x for VGG while preserving an accuracy of $87.7\%$ without any fine-tuning simply by leveraging rank structure from separate runs. For LeNet, a reduction in model size by a factor of three is achievable without sacrificing accuracy. Last, for ResNet-18 the reduction is 1.7$\times$. As highlighted earlier, we aim to delve deeper into this subject in future studies. %

\begin{figure*}[h]
  \centering
  \begin{subfigure}[t]{0.32\textwidth}
    \includegraphics[width=\textwidth]{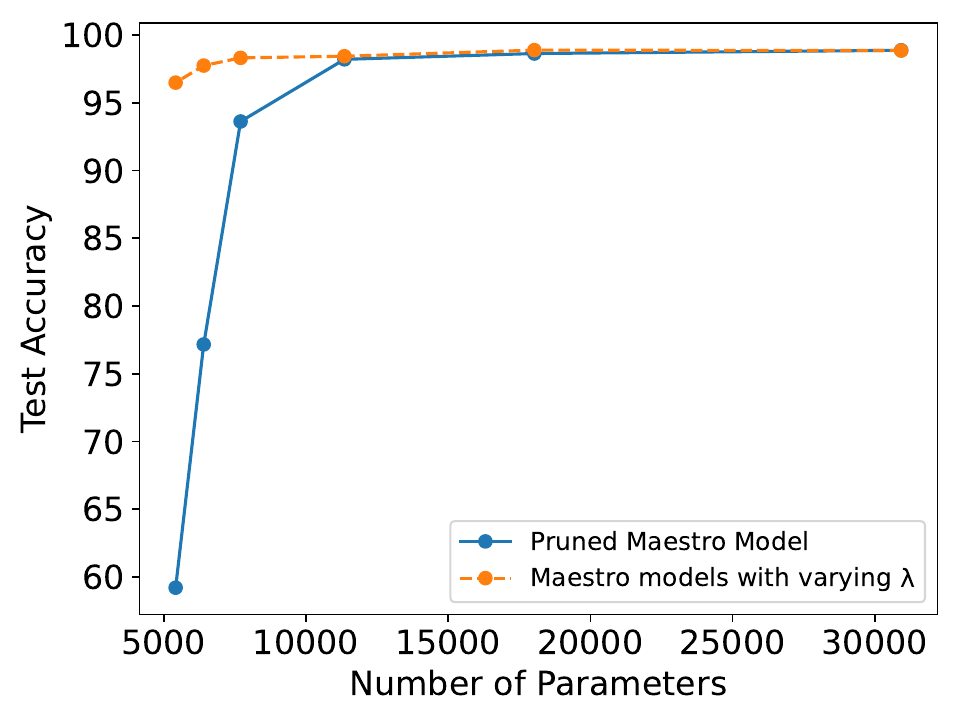}
    \captionsetup{font=small,labelfont=bf}
    \caption{LeNet on MNIST}
  \end{subfigure}
  \hfill
  \begin{subfigure}[t]{0.32\textwidth}
    \includegraphics[width=\textwidth]{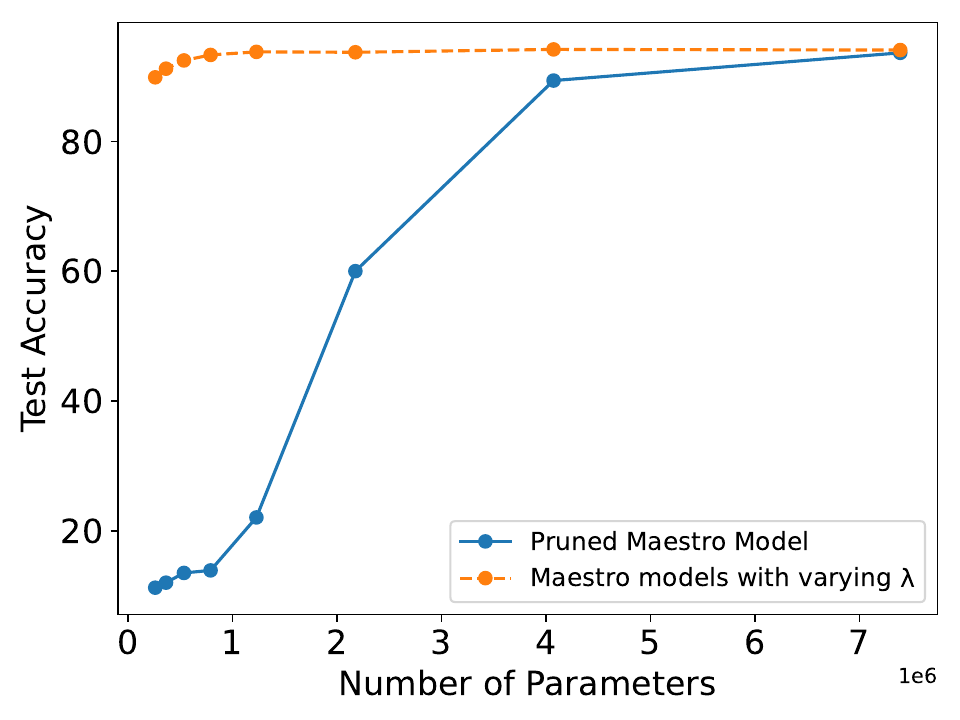}
    \captionsetup{font=small,labelfont=bf}
    \caption{ResNet-18 on CIFAR10}
  \end{subfigure}
  \hfill
  \begin{subfigure}[t]{0.32\textwidth}
    \includegraphics[width=\textwidth]{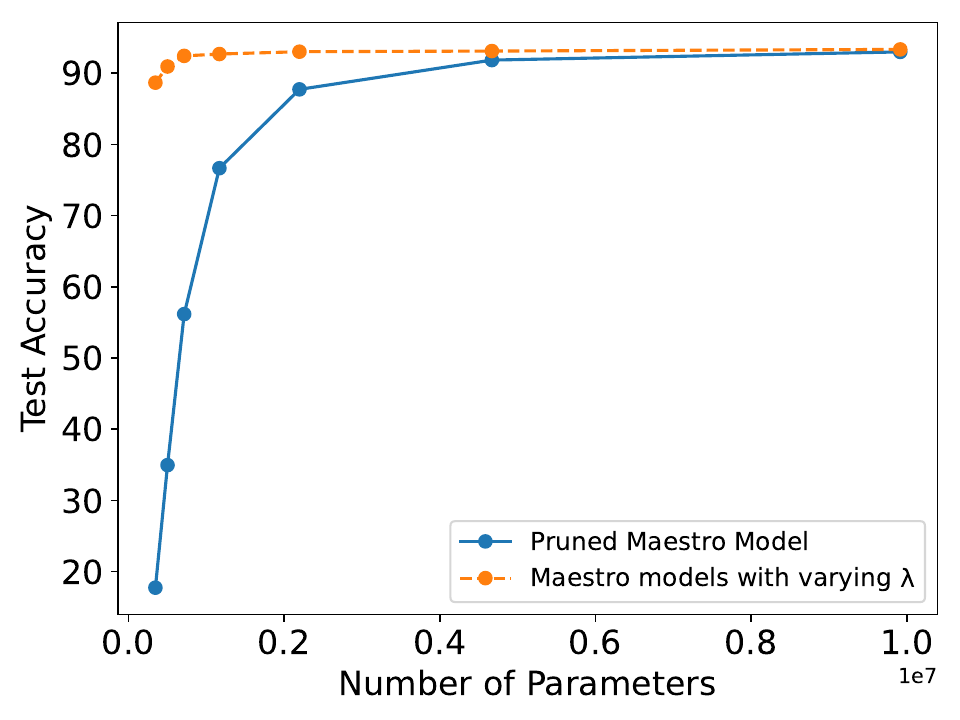}
    \captionsetup{font=small,labelfont=bf}
    \caption{VGG19 on CIFAR10}
  \end{subfigure}
  \captionsetup{font=small,labelfont=bf}
  \caption{\tool with progressive pruning to showcase nested rank importance structure. The original model corresponds to an evaluation in Fig.~\ref{fig:hierarchical_group_lasso}, and pruned models are based on \tool with $\lambda_{gl} = 0$, and they are pruned using the same ranks as selected by \tool with $\lambda_{gl} > 0$.}
  \label{fig:one_step_pruning} 
\end{figure*}

\subsection{Detailed Comparison with Baselines}
\label{app:detailed_baselines}

\begin{table}[t]
    \centering
    \vspace{-0.25cm}
    \captionsetup{font=small,labelfont=bf}
    \caption{Maestro vs. baselines on CIFAR10.}
    \vspace{-0.2cm}
    \label{tab:cifar10_baselines}
    \scalebox{0.75}{
        \begin{tabular}{lllll}
            \toprule
            \rowcolor{Gray} Variant                         & Model       & Acc. (\%)       & GMACs & Params. ($M$) \\ 
            \midrule
            Non-factorized              & ResNet-18 & 93.86{\tiny$\pm0.20$} & 0.56 & 11.17 \\
            Pufferfish                      & ResNet-18 & 94.17 & 0.22 & 3.336 \\
            Cuttlefish                      & ResNet-18 & 93.47 & 0.3 & 3.108 \\
            IMP                             & ResNet-18 & 92.12 & - & 0.154\\
            RareGems                        & ResNet-18 &92.83 & - & \textbf{0.076}\\
            XNOR-Net                        & ResNet-18 & 90.06 & - & 0.349$^\dagger$ \\
            \tool{}$^\dagger$ & \multirow{2}{*}{ResNet-18} & \multirow{2}{*}{\textbf{94.19{\tiny$\pm$0.07}}} & \multirow{2}{*}{{0.39{\tiny$\pm$0.00}}} & \multirow{2}{*}{4.08{\tiny$\pm$0.02}} \\
            ($\lambda_{gp} = 16e^{-6}$) & & & &  \\
            \tool{}$^\dagger$ & \multirow{2}{*}{ResNet-18} & \multirow{2}{*}{93.86{\tiny$\pm$0.11}} & \multirow{2}{*}{{0.15{\tiny$\pm$0.00}}} & \multirow{2}{*}{1.23{\tiny$\pm$0.00}} \\
           ($\lambda_{gp} = 64e^{-6}$) & & & &  \\
            \midrule
            Non-factorized              & VGG-19 & 92.94{\tiny$\pm0.17$} & 0.40 & 20.56 \\
            Pufferfish                      & VGG-19 & 92.69 & 0.29 & 8.37 \\
            Cuttlefish                      & VGG-19 & \textbf{93.39} & 0.15  & 2.36 \\
            RareGems                        & VGG-19 & 86.28 & - & 5.04 \\
            IMP                             & VGG-19 & 92.86 & - & 5.04\\
            XNOR-Net                        & VGG-19 & 88.94 & - & {0.64}$^\dagger$ \\
            Spectral Init.$^*$              & VGG-19 & 83.27 & - & $\approx$ 0.4 \\
            \tool{}$^\dagger$ & \multirow{2}{*}{VGG-19} & \multirow{2}{*}{93.10{\tiny$\pm$0.10}} & \multirow{2}{*}{0.13{\tiny$\pm$0.00}} & \multirow{2}{*}{2.20{\tiny$\pm$0.03}} \\
            ($\lambda_{gp} = 32e^{-6}$) & & & &  \\
            \tool{}$^\dagger$ & \multirow{2}{*}{VGG-19} & \multirow{2}{*}{88.53{\tiny$\pm$0.13}} & \multirow{2}{*}{\textbf{0.03{\tiny$\pm$0.00}}} & \multirow{2}{*}{\textbf{0.35{\tiny$\pm$0.00}}} \\
            ($\lambda_{gp} = 512e^{-6}$) & & & &  \\
            \bottomrule
            \multicolumn{5}{p{10.4cm}}{{$^*$Results from original work; $\dagger$: XNOR-Net employs binary weights and activations; although the overall \#trainable parameters remain the same as the vanilla network, each model weight is quantized from 32-bit to 1-bit. Therefore, we report a compression rate of $3.125\% (\nicefrac{1}{32})$.}}
        \end{tabular}
    }
\end{table}

\red{Tab.~\ref{tab:cifar10_baselines} presents the details of \tool's performance compared to the selected baselines leveraging pruning, quantization and low-rank techniques presented in Sec.~\ref{sec:performance_comparison} for CIFAR-10. These numbers along with the operating points from Tab.~\ref{tab:resnet_gp_lambda} and~\ref{tab:vgg_gp_lambda} are illustrated in Fig.~\ref{fig:cifar10_baselines}.}

\end{document}